\documentclass{article}
\usepackage{iclr2027_conference,times}

\usepackage{amsmath,amssymb,amsthm}
\usepackage{mathtools}
\usepackage{bm}
\usepackage{graphicx}
\usepackage{placeins}
\usepackage{booktabs}
\usepackage{pifont}
\usepackage{xcolor}
\usepackage{pgfplots}
\pgfplotsset{compat=1.18}

\newcommand{\et}[2]{#1$^{\scriptscriptstyle\pm#2}$}
\newcommand{\ets}[2]{\underline{#1$^{\scriptscriptstyle\pm#2}$}}
\newcommand{\etb}[2]{\textbf{#1$^{\scriptscriptstyle\pm#2}$}}
\newcommand{\cmark}{\ding{51}}
\newcommand{\xmark}{\ding{55}}

\newcommand{\fitwidth}[2]{%
  \sbox0{#2}%
  \ifdim\wd0>#1 \resizebox{#1}{!}{\usebox0}%
  \else \usebox0\fi}

\usepackage[hidelinks]{hyperref}
\usepackage{url}

\newtheorem{theorem}{Theorem}[section]
\newtheorem{lemma}[theorem]{Lemma}
\newtheorem{proposition}[theorem]{Proposition}

\newtheorem{definition}[theorem]{Definition}
\newtheorem{assumption}[theorem]{Assumption}

\usepackage[capitalize]{cleveref}
\crefname{assumption}{assumption}{assumptions}
\Crefname{assumption}{Assumption}{Assumptions}

\newcommand{\bx}{\mathbf{x}}

\newcommand{\bs}{\mathbf{s}}

\newcommand{\ba}{\mathbf{a}}
\newcommand{\bc}{\mathbf{c}}
\newcommand{\bd}{\mathbf{d}}
\newcommand{\bq}{\mathbf{q}}
\newcommand{\bu}{\mathbf{u}}
\newcommand{\bv}{\mathbf{v}}
\newcommand{\bJ}{\mathbf{J}}
\newcommand{\bepsilon}{\boldsymbol{\epsilon}}
\newcommand{\bo}{\mathbf{o}}
\newcommand{\br}{\mathbf{r}}

\newcommand{\bA}{\mathbf{A}}
\newcommand{\bI}{\mathbf{I}}

\newcommand{\bz}{\mathbf{z}}
\newcommand{\by}{\mathbf{y}}

\title{FloodDiffusion 2: Efficient and Path Controllable Streaming Motion Generation}
\author{Yiyi Cai\textsuperscript{1}\thanks{work done in Alaya Lab}\quad
Yuhan Wu\textsuperscript{2}\quad
Kunhang Li\textsuperscript{1,2}\quad
Tu Fangyuan\textsuperscript{1,3}\quad
Xiangyue Zhang\textsuperscript{1,2}\\
\textbf{Qiaoge Li\textsuperscript{4}\quad
Zhixiang Wang\textsuperscript{1}\quad
Kaipeng Zhang\textsuperscript{1,}\thanks{Corresponding authors.}\quad
Haiyang Liu\textsuperscript{1,2,}\footnotemark[3]}\\
\textsuperscript{1}Alaya Lab\\
\textsuperscript{2}The University of Tokyo\\
\textsuperscript{3}Japan Advanced Institute of Science and Technology\\
\textsuperscript{4}China Mobile Communications Company Limited Research Institute
}

\iclrfinalcopy
\hypersetup{
    pdftitle={FloodDiffusion 2: Efficient and Path Controllable Streaming Motion Generation},
    pdfauthor={Yiyi Cai, Yuhan Wu, Kunhang Li, Tu Fangyuan, Xiangyue Zhang, Qiaoge Li, Zhixiang Wang, Kaipeng Zhang, Haiyang Liu}
}

\begin{document}
\setcounter{footnote}{1}
\maketitle
\lhead{Preprint.}

\begin{abstract}
We present FloodDiffusion 2 (FD2), an efficient and controllable framework that builds upon FloodDiffusion (FD1), a state-of-the-art streaming motion generation model. While FD1 produces plausible motion, it suffers from low efficiency and limited controllability, as its attention design requires repeated computation over the entire history, and it lacks precise trajectory control for real-world applications.
To address these limitations and improve generation quality, FD2 introduces three advances. First, \emph{Partial Attention} makes finalized history representations independent of the active window, enabling KV-cached inference and shared-history packing for efficient training. Second, we establish a necessary-and-sufficient Bregman criterion for regression losses to preserve diffusion's conditional-mean velocity field. This criterion guides an FK-induced quadratic loss that incorporates motion geometry without online FK evaluation. Third, FD2 introduces precise path conditioning to control the character's root trajectory while preserving natural body motion.
Experiments show that FD2 reduces training computation by 4.6$\times$ and accelerates denoising by 11.29$\times$, reaching 2.303 ms per update on long sequences. Alongside these efficiency gains, FD2 improves motion quality over FD1 and achieves state-of-the-art FID scores among streaming methods, with 0.048 on SEED and 0.053 on HumanML3D.
\end{abstract}

\begin{figure}[h]
    \centering
    \includegraphics[width=\linewidth]{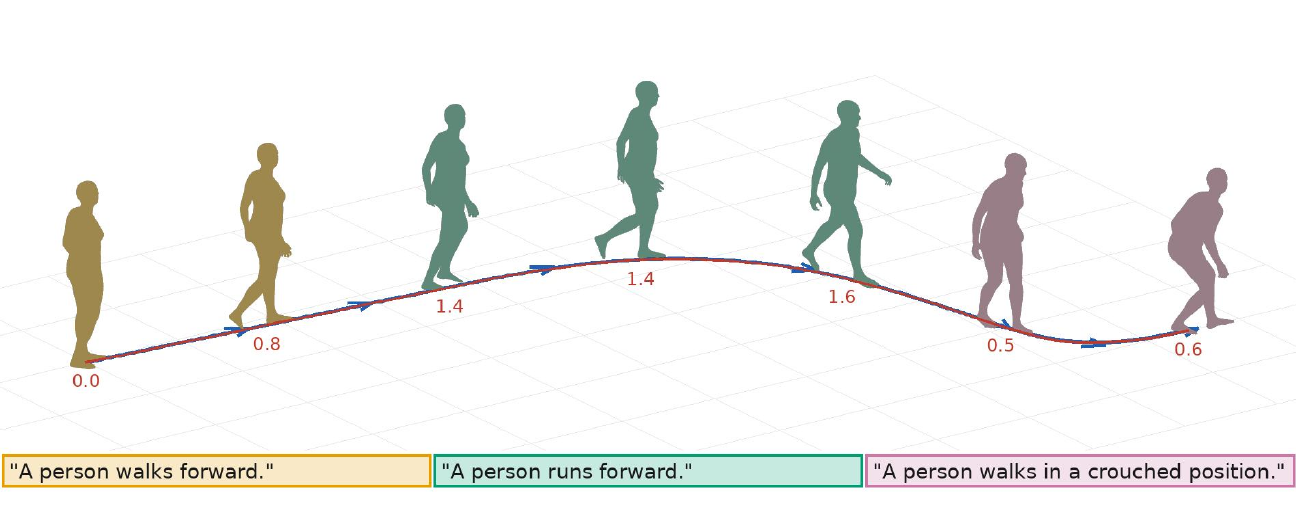}
    \caption{\textbf{FloodDiffusion 2} generates continuous human motion from time-varying text prompts while following a specified path. The character transitions smoothly from walking to running and then crouched walking. Blue and red curves denote the target and generated center-of-mass trajectories, respectively; numbers indicate the generated planar speed in m/s.}
    \label{fig:teaser}
\end{figure}

\section{Introduction}
\label{sec:introduction}

Streaming text-to-motion generation produces motion incrementally while accepting changing conditions~\citep{zhaodartcontrol,xiao2025motionstreamer,cai2025flooddiffusion,zhao2026ardy}. Interactive virtual characters require efficient generation and precise trajectory control alongside natural, text-aligned motion. FloodDiffusion (FD1), a state-of-the-art streaming motion generation model, realizes this process through a frame-staggered diffusion schedule: frames enter and complete denoising successively as the active region advances~\citep{cai2025flooddiffusion}.

FD1 has two practical limitations. First, its bidirectional attention allows finalized history to depend on the changing active window. History representations therefore change during denoising, preventing standard KV caching and requiring repeated computation during training and inference. Second, FD1 provides only text-based control, which cannot precisely specify even a character's walking direction. Real-world applications require the character to follow a specified route while performing different actions. We therefore introduce FloodDiffusion 2 (FD2) to improve efficiency and enable precise path control while preserving natural motion.

To improve efficiency, FD2 retains FD1's schedule and sampler and employs Partial Attention. History queries read their causal prefixes; active queries read the complete history and interact bidirectionally within the active window. This preserves access to the inputs of the streaming velocity field while making history representations independent of subsequent frames. FD2 uses this structure to support standard KV-cached inference and shared-history training. During inference, finalized history representations can be cached and reused as the active window advances. During training, one clean reference encoding is shared across multiple isolated denoising windows, reducing repeated history computation. 

To improve generation quality, we align supervision with human perception of motion. Errors in motion features, such as local joint rotations, do not directly reflect their effects on body geometry: rotation errors can accumulate along kinematic chains and produce noticeable positional errors at the hands and feet. A natural approach is to introduce a forward kinematics (FK) loss that penalizes errors in the reconstructed body geometry. However, directly applying a nonlinear FK loss can shift the regression optimum away from the conditional-mean velocity field required by diffusion. 

We therefore ask which losses preserve this velocity field while incorporating geometric supervision. Sampled velocities are regression targets, but the field to be learned is their conditional mean. Applying the classical mean-elicitation result~\citep{banerjee2005optimality,gneiting2011making}, we establish a necessary-and-sufficient criterion under the stated regularity and integrability conditions: the compatible losses are exactly target-first Bregman divergences generated by strictly convex potentials, up to a target-only term. Guided by this criterion, we construct an FK-induced quadratic from a dataset-averaged kinematic response, incorporating geometric sensitivity while preserving the conditional mean and avoiding online FK evaluation.

FD2 incorporates root-path conditioning to enable precise trajectory control. The model receives a frame-aligned root trajectory together with time-varying text prompts, allowing spatial movement and action semantics to be specified jointly. In our SEED representation, the root is a mass-weighted center-of-mass ground projection, with explicit root-relative pelvis translation and rotation. This separates the commanded trajectory from the body's articulation, reducing gait-cycle sway while preserving natural body motion. \Cref{fig:teaser} illustrates changing actions along a specified path.

Experiments on SEED and HumanML3D demonstrate improvements in both efficiency and motion quality. Shared-history packing reduces training denoiser computation by $4.6\times$. With 4,500 history frames, KV caching accelerates denoising by $11.29\times$ to 2.303~ms per update on an RTX 4090. Alongside these efficiency gains, FD2 improves HumanML3D FID from FD1's 0.057 to 0.053 and achieves an FID of 0.048 on SEED, yielding state-of-the-art FID scores among the compared streaming text-to-motion methods on both benchmarks.

Our contributions are:
\begin{itemize}
    \item \textbf{Partial Attention for efficient streaming diffusion}
    enables standard KV-cached inference and reduces training computation through shared-history multi-window supervision.
    \item \textbf{A necessary-and-sufficient characterization of diffusion-compatible losses}
    connects conditional-mean velocity regression to Bregman divergences and guides an FK-induced quadratic objective for geometric supervision.
    \item \textbf{Root-path conditioning}
    enables precise trajectory control while preserving natural root-relative body motion.
\end{itemize}

\section{Related Work}
\label{sec:relatedworks}

\paragraph{Motion generation and streaming.}
\label{sec:rw_df}
\label{sec:rw_motion}
Motion generation spans continuous diffusion~\citep{tevet2022human,zhang2024motiondiffuse}, autoregression~\citep{zhang2023generating}, retrieval augmentation~\citep{zhang2023remodiffuse}, and masked token modeling~\citep{guo2024momask}. MotionStreamer and ARDY generate incrementally~\citep{xiao2025motionstreamer,zhao2026ardy}. FD1 uses frame-staggered diffusion~\citep{cai2025flooddiffusion}, building on Diffusion Forcing's per-element noise levels~\citep{chen2024diffusion}. FD2 retains FD1's schedule and sampler.

\paragraph{Efficient streaming diffusion.}
\label{sec:rw_llm}
Transformer-XL reuses history states for autoregressive generation~\citep{dai2019transformerxl}. Rolling Forcing combines causal history with a bidirectional active window for KV caching~\citep{liu2025rolling}. It uses distribution-matching distillation from a pretrained teacher and, like Self Forcing~\citep{huang2025self}, trains on self-generated histories. FD2 employs Partial Attention for direct velocity supervision on motion data, sharing a clean reference encoding across isolated denoising windows.

\paragraph{Geometry-aware supervision.}
MDM supplements feature regression with position, velocity, and foot-contact losses~\citep{tevet2022human}. Such geometric supervision motivates our objective, but nonlinear FK losses can shift the conditional-mean velocity target. We apply classical mean-elicitation theory~\citep{banerjee2005optimality,gneiting2011making} to identify compatible regression losses and propose an FK-induced quadratic objective that preserves this target without online FK evaluation.

\paragraph{Spatially controlled motion generation.}
GMD and OmniControl support spatial joint and trajectory constraints alongside text~\citep{karunratanakul2023guided,xie2024omnicontrol}. Kimodo overwrites constrained features with clean values and supplies a control mask; its denoiser predicts root before body, using a smoothed root distinct from the pelvis~\citep{rempe2026kimodo}. ARDY overwrites constrained global root channels while generating latent body motion and accepts future path constraints beyond the active window~\citep{zhao2026ardy}. FD2 similarly holds three relative root channels clean and diffuses explicit body features within FD1's frame-staggered schedule. Its SEED root is a mass-weighted center-of-mass ground projection, with separate pelvis motion preserving root-relative articulation.

\section{Methods}
\label{sec:methods}

FD2 retains FD1's frame-staggered diffusion schedule and sampler~\citep{cai2025flooddiffusion}. We employ Partial Attention for efficient inference and training, characterize diffusion-compatible losses to design an FK-induced geometric objective, and incorporate root-path conditioning for precise trajectory control.

\subsection{Preliminaries}
\label{sec:preliminaries}
We use a 138-D rotation-direct state for SEED and native 263-D features for HumanML3D. For a clean standardized motion sequence $\bz$, independent noise $\bepsilon\sim\mathcal N(0,I)$, and frame-aligned conditions $\bc$, frame $k$ follows
\begin{equation}
    \alpha_t^k=\mathrm{clamp}(t-k/n_s,0,1),\qquad
    \beta_t^k=1-\alpha_t^k,
    \label{eq:tri_schedule}
\end{equation}
where $n_s$ controls the active-window width. The noisy state and sampled velocity are
\begin{equation}
    \bx_t^k=\alpha_t^k\bz^k+\beta_t^k\bepsilon^k,\qquad
    \bv_t^k=\dot\alpha_t^k\bz^k+\dot\beta_t^k\bepsilon^k.
    \label{eq:streaming_state}
\end{equation}
Away from schedule knots, finalized history $H=\{k:\alpha_t^k=1\}$ precedes the active window $A=\{k:0<\alpha_t^k<1\}$, where $\bv_t^k=\bz^k-\bepsilon^k$. Future frames remain pure noise; history and future have zero sampled velocity. The prescribed marginal velocity is
\begin{equation}
    u_t^A(\bx_t^{H\cup A},\bc^{H\cup A})
    =\mathbb E[\bv_t^A\mid\bx_t^{H\cup A},\bc^{H\cup A}].
    \label{eq:streaming_target}
\end{equation}
FD1's streaming-locality result makes this processed prefix sufficient under its causal-conditioning assumption. Appendix~\ref{sec:proof_partial_diffusion} states the assumption and exact interval indices.

\begin{figure}[htbp]
    \centering
    \includegraphics[width=\linewidth]{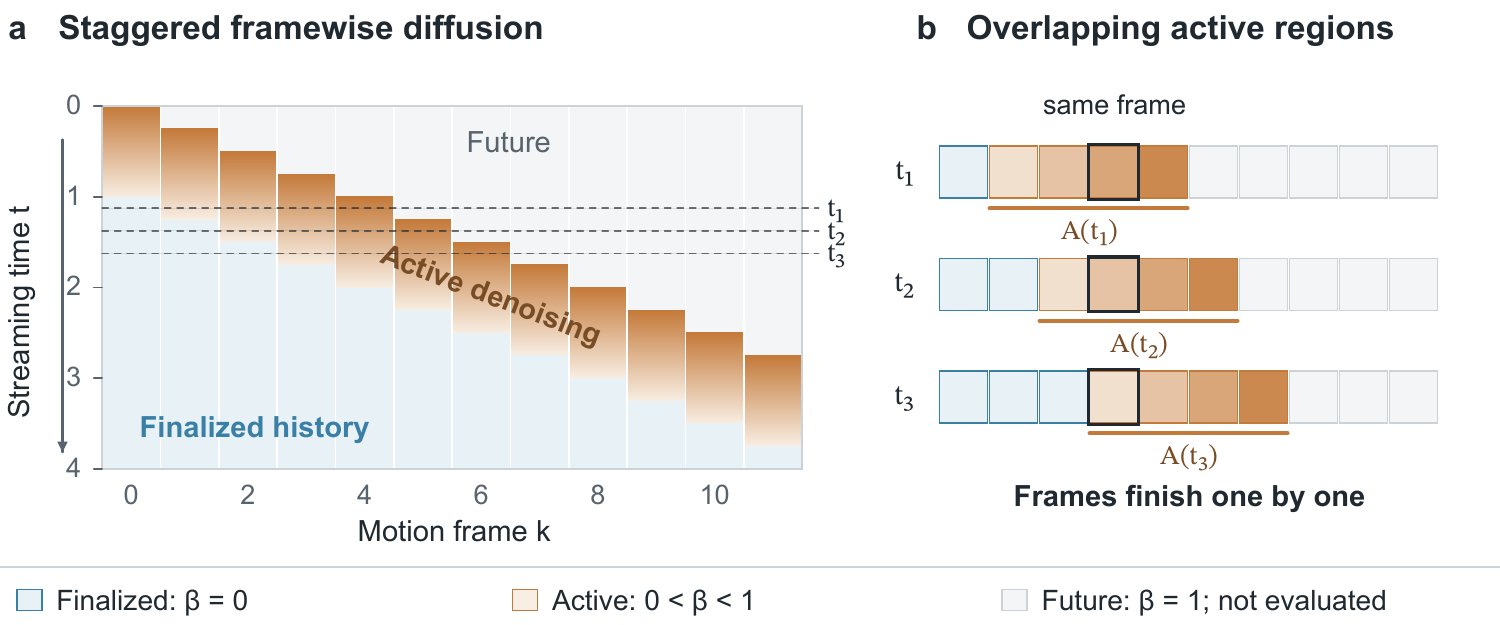}
    \caption{Frame-staggered diffusion schedule retained from FD1. (a) Time-shifted denoising ramps separate finalized history, the active window, and future noise. (b) Overlapping active windows advance as frames finish denoising one by one. The illustration uses $n_s=4$; $\beta$ denotes the noise amplitude.}
    \label{fig:staggered_schedule}
\end{figure}

\subsection{Partial Attention}
\label{sec:partial}
We employ Partial Attention to enable KV-cached inference and shared-history training. History queries are causal, while active queries read the complete processed prefix:
\begin{equation}
    \bA^{\mathrm p}_{j,k}=\begin{cases}
        1 & j\in H,\ k\le j,\\
        1 & j\in A,\ k\in H\cup A,\\
        0 & \text{otherwise.}
    \end{cases}
    \label{eq:partial_mask}
\end{equation}
Thus history can be encoded independently of later frames, while active outputs retain access to the information in \Cref{eq:streaming_target}. Appendix~\ref{sec:proof_partial_diffusion} establishes this decomposition and gives a data-law counterexample to fully causal active outputs.

\paragraph{KV-cache inference.}
With row-wise normalization and feed-forward operations, causal history states remain unchanged when their inputs, positions, noise metadata, and frame-aligned conditions are fixed. FD2 caches their layer-wise keys and values, re-encoding each newly finalized frame once under the causal mask before appending it. When prompts change, the conditions of finalized frames remain fixed, preserving their cached representations. Appendix~\ref{sec:proof_partial} proves prefix invariance.

\paragraph{Shared-history training.}
For each sequence, we sample one diffusion time from each of $K$ strata and pack the resulting noisy windows with one clean reference sequence $C$. A window $A_w$ starting at $s_w$ reads only $C_{<s_w}\cup A_w$; reference rows are causal. Noisy copies remain isolated even when their source intervals overlap. Under the conditions in Appendix~\ref{sec:proof_multiwindow}, their predictions match isolated calls, and the loss is averaged over valid target tokens. This shares history computation across $K$ training windows (\Cref{fig:partial_mask}).

\begin{figure}[htbp]
    \centering
    \includegraphics[width=\linewidth]{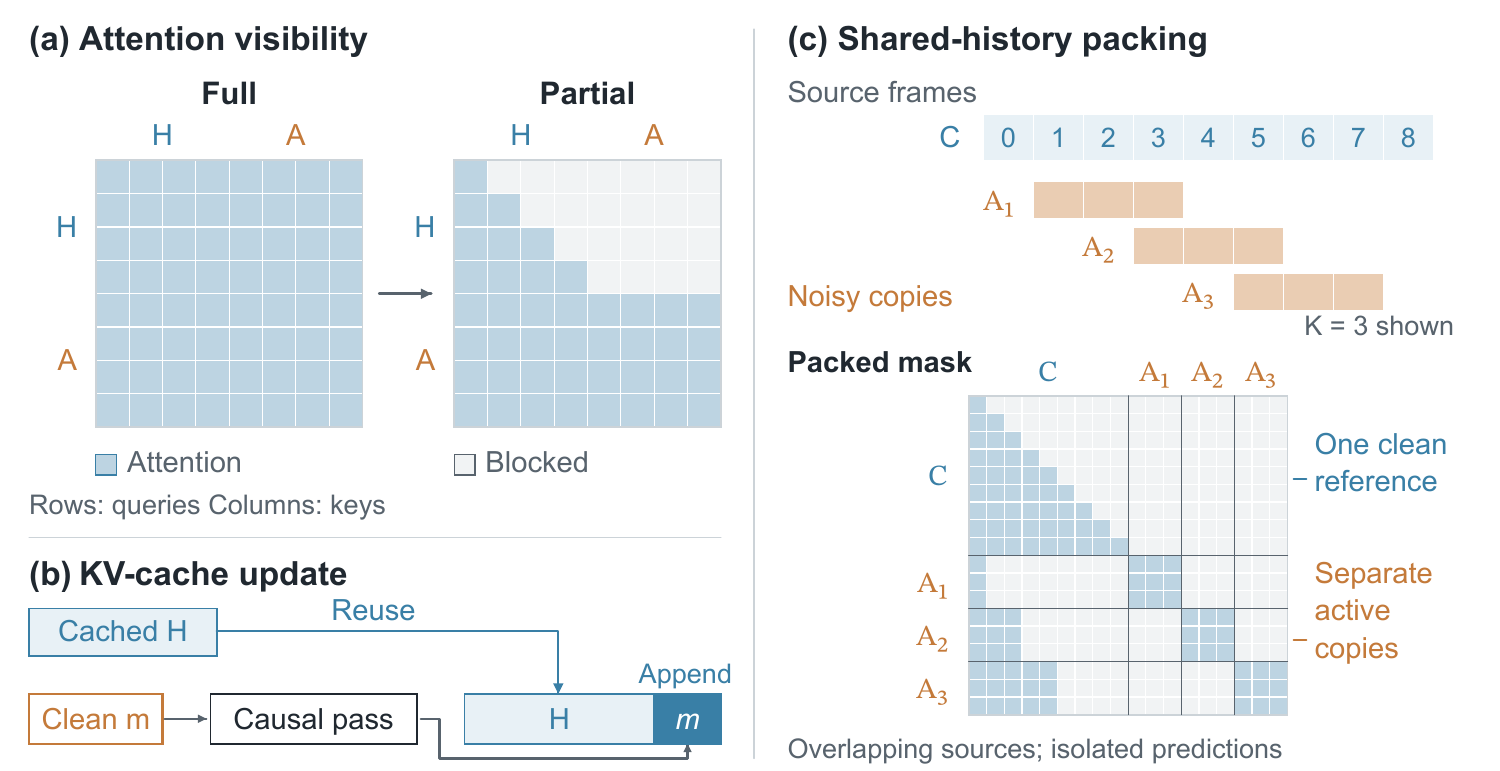}
    \caption{Partial Attention enables KV caching and shared-history training. (a) History queries read their causal prefixes, while active queries read the complete history and interact bidirectionally within the active window. (b) Newly finalized frames are re-encoded causally before their keys and values are appended to the cache. (c) Isolated noisy windows share one causal clean reference, reducing repeated history computation.}
    \label{fig:partial_mask}
\end{figure}

\subsection{Diffusion-Compatible Losses}
\label{sec:mesh_objective}
We seek geometric supervision that reflects the effects of motion-feature errors while preserving diffusion's conditional-mean velocity field (\Cref{eq:streaming_target}). Although training supervises sampled velocities, the loss must uniquely recover their conditional mean for every conditional target distribution.

Applying the classical mean-elicitation result~\citep{banerjee2005optimality,gneiting2011making}, we obtain a necessary-and-sufficient criterion: under the regularity and integrability conditions in Appendix~\ref{sec:app_loss_characterization}, these losses are exactly target-first Bregman divergences, up to a target-only term:
\begin{equation}
    \ell(y,a)=D_\varphi(y,a)+c(y),\qquad
    D_\varphi(y,a)=\varphi(y)-\varphi(a)-\nabla\varphi(a)^\top(y-a),
    \label{eq:mean_bregman_main}
\end{equation}
with strictly convex $\varphi$, sampled target $y$, and prediction $a$. Let $X$ denote the predictor inputs, $Y$ the sampled velocity, and $\mu=\mathbb E[Y\mid X]$. The conditional risk identity is
\begin{equation}
    \mathbb E[\ell(Y,a)-\ell(Y,\mu)\mid X]=D_\varphi(\mu,a).
    \label{eq:mean_risk_main}
\end{equation}
Strict convexity makes $\mu$ the unique optimum. Thus geometric weighting can change without shifting the required conditional mean. \Cref{fig:loss_characterization} summarizes this criterion and our construction.

\paragraph{Direct nonlinear FK supervision.}
Let $F_{\boldsymbol\eta}$ decode a clean standardized state to geometric output, with $\boldsymbol\eta$ supplying the relative representation's root context. At an active frame, the predicted velocity gives $\widehat{\bz}=\bx_t+\beta_t\widehat{\bv}$. Direct supervision with a fixed geometric metric $M\succeq0$ would minimize
\begin{equation}
    \ell_{\mathrm{direct}}(\bv,\widehat{\bv})
    =\left\|F_{\boldsymbol\eta}(\bz)-F_{\boldsymbol\eta}(\bx_t+\beta_t\widehat{\bv})\right\|_M^2.
    \label{eq:direct_fk_loss}
\end{equation}
For nonlinear $F_{\boldsymbol\eta}$, averaging geometric outputs generally differs from decoding the mean state. Consequently, this loss need not recover the conditional-mean velocity; Appendix~\ref{sec:app_loss_characterization} provides a counterexample. The controlled experiment in Appendix~\ref{sec:app_fk_sampling} further shows that nonlinear FK supervision can bias sampling toward high-density regions, reducing diversity.

\paragraph{An FK-induced quadratic.}
We retain geometric sensitivity through a linear response in standardized coordinates. For the 138-D state, $J_{\boldsymbol\eta,\bz}$ is the neutral-SMPL-H surface-decoder Jacobian: a small state perturbation $\boldsymbol\delta$ produces local geometric error $\boldsymbol\delta^\top J_{\boldsymbol\eta,\bz}^\top M J_{\boldsymbol\eta,\bz}\boldsymbol\delta$, where $M\succeq0$ weights vertices by surface area. For the 263-D features, $J_{\boldsymbol\eta,\bz}$ is a designed kinematic joint-response operator. We average and normalize the response Gram matrices using training data:
\begin{equation}
    \overline G=\mathbb E_{(\boldsymbol\eta,\bz)\sim p_{\mathrm{train}}}
    [J_{\boldsymbol\eta,\bz}^{\top}MJ_{\boldsymbol\eta,\bz}],\qquad
    W_{\mathrm{FK}}=\frac{d\,\overline G}{\operatorname{tr}(\overline G)}.
    \label{eq:mesh_pullback_main}
\end{equation}
Here $d$ is the number of predicted channels. The averaged matrix is independent of the current regression target; target-dependent per-example weights generally elicit a weighted mean. The clean-state residual is $\boldsymbol\delta=\beta_t\mathbf e$ for velocity residual $\mathbf e=\widehat{\bv}-\bv$. Omitting $\beta_t^2$ reweights time/frame contributions while preserving their conditional-mean optima. We use the fixed channel metric
\begin{equation}
    W_\gamma=\frac{I_d+\gamma W_{\mathrm{FK}}}{1+\gamma},\qquad
    \mathcal L_\gamma=\frac1d\mathbf e^\top W_\gamma\mathbf e,\qquad \gamma\ge0.
    \label{eq:mesh_quadratic}
\end{equation}
The identity term makes $W_\gamma$ positive definite, and trace normalization keeps its trace at $d$. The potential $\varphi_\gamma(\bu)=\bu^\top W_\gamma\bu/d$ generates this quadratic as a target-first Bregman divergence, so \Cref{eq:mean_risk_main} guarantees the same conditional-mean velocity optimum. The matrix is estimated once and held fixed; training requires no online FK or Jacobians. FD2-Path restricts the response to its predicted channels before averaging and normalization. Appendices~\ref{sec:app_mesh_quadratic} and~\ref{sec:app_representation} give the complete construction and decoder.

\begin{figure}[!t]
    \centering
    \includegraphics[width=\linewidth]{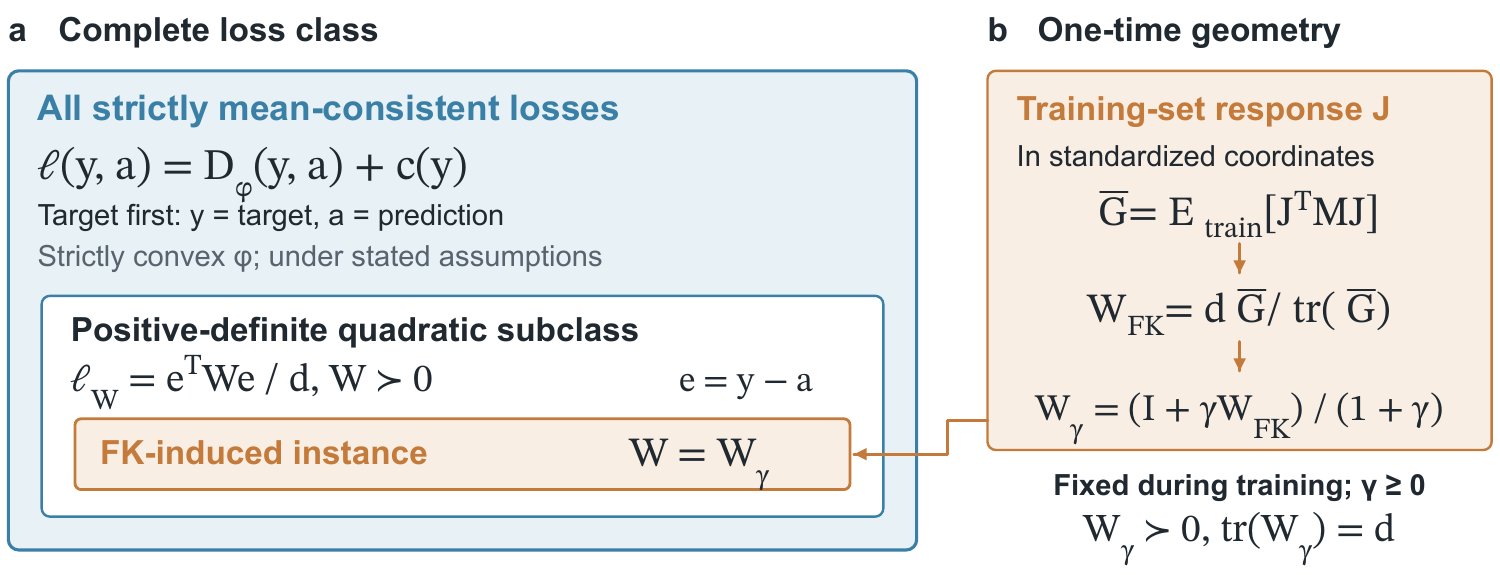}
    \caption{Diffusion-compatible losses and the FK-induced quadratic objective. (a) Under the stated regularity and integrability conditions, target-first Bregman divergences with strictly convex potentials, up to a target-only term, are necessary and sufficient for uniquely recovering the conditional mean. (b) Kinematic response Gram matrices estimated from training data are averaged and normalized, then combined with an identity term to obtain a fixed positive-definite metric, avoiding online FK evaluations.}
    \label{fig:loss_characterization}
\end{figure}

\subsection{Root-Path Conditioning}
\label{sec:position_control}
FD2-Path, the path-conditioned variant of FD2, receives a frame-aligned planar root trajectory $(x_k,z_k,\psi_k)$ together with text. We encode the trajectory into three relative root channels $\bc_k$: heading increments and heading-local displacement. These channels remain clean and are concatenated with the noisy remaining state $\bx_t^k$:
\begin{equation}
    \widehat{\bv}_t^k=f_\theta\!\left(
    [\mathcal N_c(\bc_k)\,\Vert\,\bx_t^k],t,E_{\mathrm{text}}(\by);\bA^{\mathrm p}\right).
    \label{eq:position_condition}
\end{equation}
Here $\mathcal N_c$ standardizes the root conditions per channel and $\by$ denotes the text prompt. Only the remaining $d=135$ (138-D) or $260$ (263-D) channels are diffused and supervised. Sampling combines the prescribed root channels with the generated body state.

\paragraph{Center-of-mass root and relative body motion.}
For the 138-D SEED state, we separate the planar root from the articulated pelvis. Given the 22 body joints $\bJ_{k,i}$, the root is the ground projection of their fixed mass-weighted average:
\begin{equation}
    \bq_k=\Pi_{\mathrm{ground}}\!\left(\sum_{i=0}^{21}w_i\bJ_{k,i}\right),
    \qquad \sum_i w_i=1.
    \label{eq:com_root_main}
\end{equation}
The weights follow biomechanical segment parameters~\citep{deleva1996adjustments}. To compare root definitions, we evaluate their smoothness on the HumanML3D training split: this root has 20.3\% lower planar jerk and 9.6\% lower curvature than a pelvis projection (Appendix~\Cref{tab:root_smoothness}). The pelvis retains its translation and 3D rotation relative to the root, allowing natural body motion around the commanded trajectory. Together with the 21 parent-local body rotations, these channels decode directly to neutral SMPL-H. The supplied root channels remain fixed during sampling. Tracking error is measured from the generated body's center-of-mass projection in \Cref{eq:com_root_main}, which depends jointly on the predicted pelvis offset and articulation. Appendix~\ref{sec:app_representation} gives the weights, heading definition, and encoding conventions.

\section{Experiments}
\label{sec:experiments}
\label{sec:results}

\subsection{Experimental Setup}
\paragraph{\normalfont Datasets and training.}
HumanML3D uses the standard split and evaluator~\citep{guo2022generating}, with FD1's preprocessing and duration handling. SEED~\citep{bonesstudio2026seed} uses the public Kimodo training partition and category-held-out Content test split with the v1.1 evaluator~\citep{kimodo2026benchmark}. SEED R@3/FID use TMR-SOMA-RP-v1 embeddings; Skate measures contact-time foot velocity (cm/s). Appendix~\ref{sec:app_seed_eval} details conversion and evaluation.

Our 114M-parameter transformer uses frozen UMT5-XXL~\citep{chung2023unimax}, $n_s=30$, $K=8$, and $\gamma=1$ ($0$ without FK). On HumanML3D, FD2 uses the 60K checkpoint with CFG 4, and FD2-Path uses the 55K checkpoint with CFG 3. Both SEED models use 300K checkpoints with CFG 2. Appendix~\ref{sec:app_training_config} details the configuration.

\subsection{Attention and Efficiency}
\label{sec:results_attention}
\paragraph{\normalfont Attention masks.}
Partial Attention retains comparable FID to Full attention (0.058 vs. 0.057) and improves R@3 and MM-Dist; fully causal attention gives FID 0.096 (\Cref{tab:attention_ablation}).

\begin{table}[htbp]
\centering
\small
\caption{Attention-mask ablation on HumanML3D with the same 263-D configuration, checkpoint budget, and $K=1$. Diversity closer to real motion (9.503) is better.}
\label{tab:attention_ablation}
\setlength{\tabcolsep}{5pt}
\begin{tabular}{l c c c c}
\toprule
Attention & R@3$\uparrow$ & FID$\downarrow$ & MM-Dist$\downarrow$ & Div.$\rightarrow$\\
\midrule
Full    & $0.810$ & $\mathbf{0.057}$ & $2.887$ & $9.579$\\
Causal  & $0.793$ & $0.096$ & $3.005$ & $9.369$\\
Partial & $\mathbf{0.814}$ & $0.058$ & $\mathbf{2.839}$ & $\mathbf{9.438}$\\
\bottomrule
\end{tabular}
\end{table}

\paragraph{\normalfont Multi-window training.}
\label{sec:results_multiwindow}
At 512 windows per step, $K=8$ packing reduces training denoiser FLOPs by $4.6\times$ and GPU count from four to one (\Cref{tab:train_eff}). One GPU also supports 1,024 windows. FD2 retains motion quality, which improves with FK supervision (\Cref{tab:quantitative_eval}).

\begin{table*}[htbp]
\centering
\small
\caption{Multi-window training with 300 frames and 16 text tokens (rounded mean caption length). TFLOPs cover denoiser forward/backward; memory sums peak usage across H200 GPUs.}
\label{tab:train_eff}
\setlength{\tabcolsep}{5pt}
\fitwidth{\textwidth}{%
\begin{tabular}{c c c c c c}
\toprule
$K$ & GPUs & Batch/GPU & Windows/step & TFLOPs$\downarrow$
& Total peak GPU GiB$\downarrow$\\
\midrule
$1$  & $4$ & $128$ & $512$   & $90.666$ & $164.27$\\
$4$  & $1$ & $128$ & $512$   & $29.827$ & $54.27$\\
$8$  & $1$ & $64$  & $512$   & $19.846$ & $36.61$\\
$16$ & $1$ & $32$  & $512$   & $\mathbf{14.903}$ & $\mathbf{27.97}$\\
\midrule
$8$  & $1$ & $128$ & $1{,}024$ & $39.691$ & $72.37$\\
\bottomrule
\end{tabular}
}
\end{table*}

\paragraph{\normalfont Cached inference.}
\label{sec:results_kvcache}
At 4,500 history frames, caching reduces backbone latency from 26.002 to 2.303\,ms per update ($11.29\times$) and computation by $137\times$ (\Cref{tab:kv_cache}).

\begin{table}[!htbp]
\centering
\small
\caption{KV-cache inference on RTX 4090: batch 1, 30 active frames, 16 cached text tokens, CUDA Graph, PyTorch SDPA, no CFG. FLOPs and latency include the complete backbone update, causal re-encoding, and K/V writes for the newly finalized frame. Cache counts bf16 history K/V.}
\label{tab:kv_cache}
\setlength{\tabcolsep}{4pt}
\begin{tabular}{c c c c c c c}
\toprule
& \multicolumn{2}{c}{Full, uncached} & \multicolumn{2}{c}{Partial, KV cached} & Cache & Speedup \\
\cmidrule(lr){2-3}\cmidrule(lr){4-5}
History & GFLOPs$\downarrow$ & ms$\downarrow$ & GFLOPs$\downarrow$ & ms$\downarrow$ & MiB & $\times$ \\
\midrule
$0$ & $6.26$ & $\mathbf{1.810}$ & $6.26$ & $1.831$ & $0$ & $0.99$ \\
$90$ & $23.26$ & $2.107$ & $\mathbf{6.53}$ & $\mathbf{1.953}$ & $2.8$ & $1.08$ \\
$450$ & $96.61$ & $3.066$ & $\mathbf{6.90}$ & $\mathbf{1.962}$ & $14.1$ & $1.56$ \\
$4500$ & $1506.95$ & $26.002$ & $\mathbf{11.01}$ & $\mathbf{2.303}$ & $140.6$ & $\mathbf{11.29}$ \\
\bottomrule
\end{tabular}
\end{table}

\subsection{Motion Quality and Path Control}
\label{sec:results_quality}
\paragraph{\normalfont HumanML3D.}
FK lowers FD2's FID from 0.058 (FD1: 0.057) to 0.053, the best among compared streaming text-only methods (\Cref{tab:quantitative_eval}). FD2-Path leads the compared path-conditioned methods in R@3, FID, MM-Dist, and 3D pelvis error. Root channels are fixed by conditioning.

\begin{table}[!htbp]
\centering
\small
\caption{HumanML3D text-only and path-conditioned generation: means with 95\% confidence intervals; bold/underline denote best/second-best per condition. Text-only baselines and adapted PRIMAL follow FD1; path results are our evaluations. Root measures mean planar control-point error; 3D, mean pelvis error against ground truth (both cm). Skate: fraction of frames with foot sliding. Control targets: Appendix~\ref{sec:app_path_conventions}.}
\label{tab:quantitative_eval}
\label{tab:humanml_path}
\setlength{\tabcolsep}{2pt}
\fitwidth{\textwidth}{%
\begin{tabular}{l c c c c c c c c}
\toprule
Method & Stream & R@3$\uparrow$ & FID$\downarrow$ & MM-Dist$\downarrow$ & Div.$\rightarrow$ & Root$\downarrow$ & 3D$\downarrow$ & Skate$\downarrow$ \\
\midrule
\multicolumn{9}{l}{\textbf{Text-only}}\\
Real motion & -- & \et{0.797}{.002} & \et{0.002}{.000} & \et{2.974}{.008} & \et{9.503}{.065} & -- & -- & -- \\
TM2T~\citep{guo2022tm2t} & \xmark & \et{0.729}{.002} & \et{1.501}{.017} & \et{3.467}{.011} & \et{8.589}{.076} & -- & -- & -- \\
T2M~\citep{guo2022generating} & \xmark & \et{0.736}{.002} & \et{1.087}{.021} & \et{3.347}{.008} & \et{9.175}{.083} & -- & -- & -- \\
MDM~\citep{tevet2022human} & \xmark & \et{0.611}{.007} & \et{0.544}{.044} & \et{5.566}{.027} & \ets{9.559}{.086} & -- & -- & -- \\
MLD~\citep{chen2023executing} & \xmark & \et{0.772}{.002} & \et{0.473}{.013} & \et{3.196}{.010} & \et{9.724}{.082} & -- & -- & -- \\
MotionDiffuse~\citep{zhang2024motiondiffuse} & \xmark & \et{0.782}{.001} & \et{0.630}{.001} & \et{3.113}{.001} & \et{9.410}{.049} & -- & -- & -- \\
T2M-GPT~\citep{zhang2023generating} & \xmark & \et{0.775}{.002} & \et{0.141}{.005} & \et{3.121}{.009} & \et{9.722}{.082} & -- & -- & -- \\
ReMoDiffuse~\citep{zhang2023remodiffuse} & \xmark & \et{0.795}{.004} & \et{0.103}{.004} & \et{2.974}{.016} & \et{9.018}{.075} & -- & -- & -- \\
MoMask~\citep{guo2024momask} & \xmark & \et{0.807}{.002} & \etb{0.045}{.002} & \et{2.958}{.008} & \et{9.677}{.032} & -- & -- & -- \\
\midrule
PRIMAL~\citep{zhang2025primal} & \cmark & \et{0.780}{.002} & \et{0.511}{.006} & \et{3.120}{.019} & \etb{9.520}{.068} & -- & -- & -- \\
MotionStreamer~\citep{xiao2025motionstreamer} & \cmark & \et{0.802}{.003} & \et{0.092}{.003} & \et{2.909}{.015} & \et{9.722}{.037} & -- & -- & -- \\
FloodDiffusion~\citep{cai2025flooddiffusion} & \cmark & \ets{0.810}{.003} & \et{0.057}{.002} & \ets{2.887}{.007} & \et{9.579}{.062} & -- & -- & -- \\
FD2 w/o FK (ours) & \cmark & \et{0.807}{.003} & \et{0.058}{.004} & \et{2.903}{.011} & \et{9.405}{.043} & -- & -- & -- \\
FD2 (ours) & \cmark & \etb{0.815}{.003} & \ets{0.053}{.004} & \etb{2.832}{.012} & \et{9.425}{.065} & -- & -- & -- \\
\midrule
\multicolumn{9}{l}{\textbf{Text + Path}}\\
GMD~\citep{karunratanakul2023guided} & \xmark & \et{0.777}{.003} & \et{0.133}{.012} & \et{3.129}{.020} & \ets{9.445}{.078} & \ets{0.003}{.000} & \et{7.19}{.13} & \et{0.144}{.001}\\
OmniControl~\citep{xie2024omnicontrol} & \xmark & \ets{0.804}{.001} & \ets{0.100}{.004} & \ets{2.928}{.012} & \et{9.688}{.007} & \et{5.48}{.09} & \ets{6.47}{.07} & \etb{0.071}{.000}\\
DART~\citep{zhaodartcontrol} & \cmark & \et{0.722}{.003} & \et{0.854}{.008} & \et{3.513}{.019} & \etb{9.498}{.014} & \et{7.26}{.05} & \et{12.40}{.15} & \et{0.130}{.001}\\
FD2-Path (ours) & \cmark & \etb{0.816}{.003} & \etb{0.050}{.004} & \etb{2.829}{.010} & \et{9.312}{.069} & \etb{0.000}{.000} & \etb{5.23}{.08} & \ets{0.082}{.000}\\
\bottomrule
\end{tabular}
}
\end{table}

\paragraph{\normalfont SEED.}
\label{sec:results_seed}
FD2 has the lowest streaming text-only FID and lower foot skate than ARDY/Kimodo, although its R@3 remains below ARDY (\Cref{tab:seed_results}). FK supervision improves all three text-only metrics and reduces FD2-Path's tracking error from 0.18 to 0.11\,cm and foot skate from 4.07 to 3.24\,cm/s.

\begin{table}[!htb]
\centering
\small
\caption{SEED Content Overview with the v1.1 evaluator. Ground truth and Kimodo text-only R@3/FID are official~\citep{kimodo2026benchmark}. Generated-motion Skate uses shared contact thresholds (parentheses: official predicted contacts). All path models receive the same 917 dense paths. Position pools conditioned frames: generated-body CoM for FD2-Path, pelvis for Kimodo/ARDY. Bold: best streaming result per condition. $^{\dagger}$ARDY uses RP01 training; FD2/Kimodo use the public SEED partition. Motion Data by Bones Studio.}
\label{tab:seed_results}
\setlength{\tabcolsep}{3pt}
\fitwidth{\linewidth}{%
\begin{tabular}{l c c c c c c}
\toprule
& & & & & \multicolumn{2}{c}{Position (cm)$\downarrow$}\\
\cmidrule(lr){6-7}
Method & Stream & R@3$\uparrow$ & FID$\downarrow$ & Skate$\downarrow$ & Mean & P95\\
\midrule
\multicolumn{7}{l}{\textbf{Text-only}}\\
\addlinespace[2pt]
Ground truth & -- & 89.09 & 0.000 & 1.85 & -- & --\\
Kimodo-SOMA-SEED-v1.1~\citep{rempe2026kimodo} & \xmark & 81.13 & 0.035 & 3.34 (3.70) & -- & --\\
\cmidrule(lr){1-7}
ARDY~\citep{zhao2026ardy}$^{\dagger}$ & \cmark & \textbf{76.34} & 0.055 & 3.17 (4.43) & -- & --\\
FloodDiffusion~2 w/o FK loss (ours) & \cmark & 70.88 & 0.053 & 3.84 & -- & --\\
FloodDiffusion~2 (ours) & \cmark & 73.12 & \textbf{0.048} & \textbf{3.08} & -- & --\\
\midrule
\multicolumn{7}{l}{\textbf{Text + Path}}\\
\addlinespace[2pt]
Kimodo-SOMA-SEED-v1.1~\citep{rempe2026kimodo} & \xmark & 78.84 & 0.031 & 4.14 (5.14) & 2.37 & 7.44\\
\cmidrule(lr){1-7}
ARDY~\citep{zhao2026ardy}$^{\dagger}$ & \cmark & 75.03 & 0.052 & 3.63 (5.11) & 1.60 & 3.91\\
FloodDiffusion~2-Path w/o FK loss (ours) & \cmark & 75.07 & 0.036 & 4.07 & 0.18 & 0.55\\
FloodDiffusion~2-Path (ours) & \cmark & \textbf{76.01} & \textbf{0.034} & \textbf{3.24} & \textbf{0.11} & \textbf{0.32}\\
\bottomrule
\end{tabular}}
\end{table}

\FloatBarrier
\subsection{Qualitative Control}
\label{sec:results_position}
\Cref{fig:prompt_switch,fig:speed_control} illustrate action and gait changes under text and path conditioning.

\begin{figure}[!htbp]
    \centering
    \includegraphics[width=0.80\linewidth]{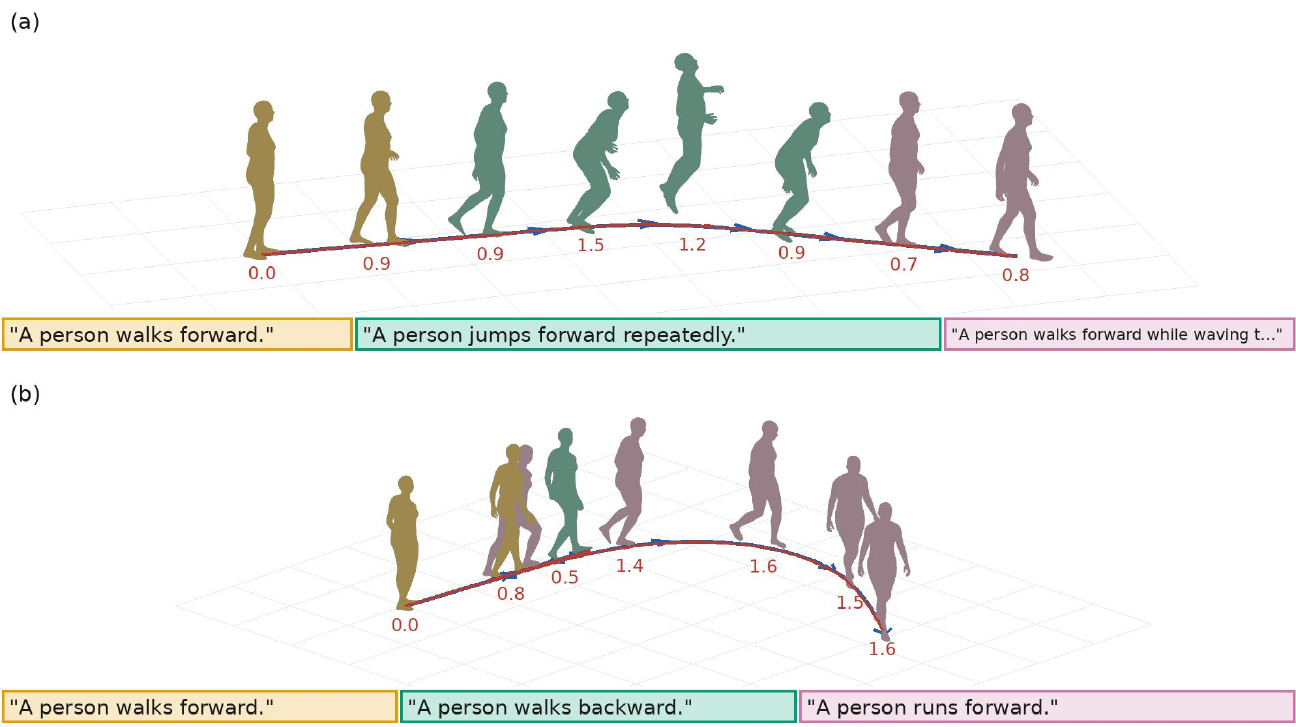}
    \caption{Prompt switching along a commanded path. Body colors indicate the active prompt. Blue is the commanded center-of-mass path, red the generated center-of-mass track; numbers show planar speed in m/s. The examples include repeated jumping and backward walking.}
    \label{fig:prompt_switch}
\end{figure}

\begin{figure}[!htbp]
    \centering
    \includegraphics[width=0.80\linewidth]{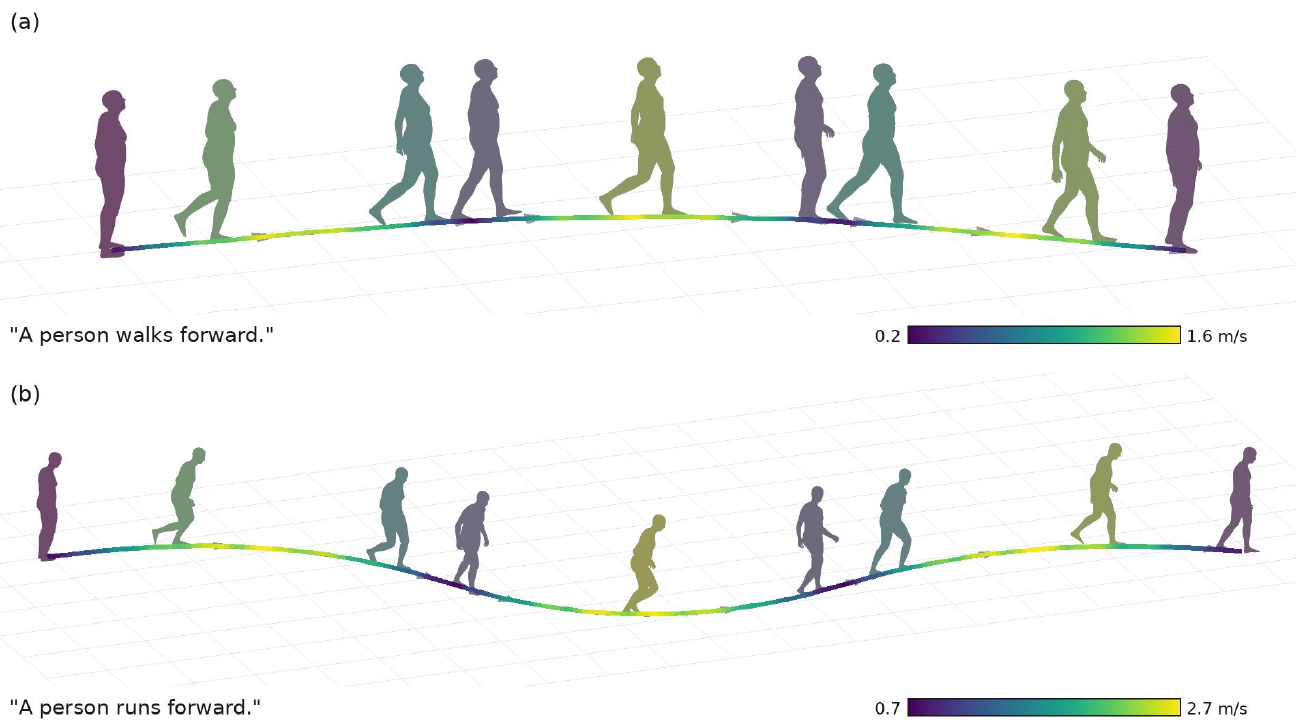}
    \caption{Speed control under fixed text prompts. The walking (a) and running (b) sequences follow varying commanded speed profiles. Body and trajectory colors indicate generated planar speed, illustrating gait adaptation along the commanded path while the action prompt remains unchanged.}
    \label{fig:speed_control}
\end{figure}

\FloatBarrier
\section{Limitations and Conclusions}
\label{sec:conclusions}

\textbf{Conclusion.} FD2 improves FD1 through Partial Attention for efficient training and inference, an FK-induced quadratic guided by the necessary-and-sufficient Bregman criterion, and root-path conditioning. Experiments on HumanML3D and SEED demonstrate improved motion quality, efficiency, and controllability.

\textbf{Limitations.} Finalized errors persist in history, and generation beyond the training horizon requires extrapolation. Cache memory and per-update attention cost grow linearly with history length. Our fixed, dataset-averaged FK quadratic leaves room for more adaptive mean-consistent geometric weighting. Motion and transition diversity are limited by training-data coverage, particularly for rare actions and unfamiliar combinations.

\subsection*{AI use statement}
Generative AI tools assisted with manuscript and citation checks,
\LaTeX{} editing, schematic artwork, and benchmarking code. The authors reviewed the final manuscript and take responsibility for its content.

{
    \small
    \bibliographystyle{iclr2027_conference}
    \bibliography{main}

@String(CVPR= {IEEE Conf. Comput. Vis. Pattern Recog.})

@String(ICCV= {Int. Conf. Comput. Vis.})

@String(ECCV= {Eur. Conf. Comput. Vis.})

@inproceedings{cai2025flooddiffusion,
  title = {{{FloodDiffusion: Tailored Diffusion Forcing for Streaming Motion Generation}}},
  author = {Cai, Yiyi and Wu, Yuhan and Li, Kunhang and Zhou, You and Zheng, Bo and Liu, Haiyang},
  booktitle = {Proceedings of the IEEE/CVF Conference on Computer Vision and Pattern Recognition},
  year = {2026},
  pages = {2295--2304},
  url = {https://openaccess.thecvf.com/content/CVPR2026/html/Cai_FloodDiffusion_Tailored_Diffusion_Forcing_for_Streaming_Motion_Generation_CVPR_2026_paper.html}
}

@article{zhao2026ardy,
  title = {{{ARDY: Autoregressive Diffusion with Hybrid Representation for Interactive Human Motion Generation}}},
  author = {Zhao, Kaifeng and Petrovich, Mathis and Zhang, Haotian and Wang, Tingwu and Tang, Siyu and Rempe, Davis},
  journal = {ACM Transactions on Graphics},
  year = {2026},
  volume = {45},
  number = {4},
  pages = {86:1--86:14},
  doi = {10.1145/3811284},
  articleno = {86},
  numpages = {14}
}

@article{rempe2026kimodo,
  title = {{{Kimodo: Scaling Controllable Human Motion Generation}}},
  author = {Rempe, Davis and Petrovich, Mathis and Yuan, Ye and Zhang, Haotian and Peng, Xue Bin and Jiang, Yifeng and Wang, Tingwu and Iqbal, Umar and Minor, David and de Ruyter, Michael and Li, Jiefeng and Tessler, Chen and Lim, Edy and Jeong, Eugene and Wu, Sam and Hassani, Ehsan and Huang, Michael and Yu, Jin-Bey and Chung, Chaeyeon and Song, Lina and Dionne, Olivier and Kautz, Jan and Yuen, Simon and Fidler, Sanja},
  journal = {arXiv preprint arXiv:2603.15546},
  year = {2026},
  doi = {10.48550/arXiv.2603.15546},
  url = {https://arxiv.org/abs/2603.15546}
}

@misc{bonesstudio2026seed,
  title = {{{BONES-SEED: Skeletal Everyday Embodiment Dataset}}},
  author = {{Bones Studio}},
  howpublished = {Hugging Face dataset},
  year = {2026},
  url = {https://huggingface.co/datasets/bones-studio/seed},
  note = {Motion Data by Bones Studio; accessed 2026-09-05}
}

@misc{kimodo2026benchmark,
  title = {{{Kimodo Results}}},
  author = {{NVIDIA}},
  howpublished = {Kimodo documentation},
  year = {2026},
  url = {https://research.nvidia.com/labs/sil/projects/kimodo/docs/benchmark/results.html},
  note = {Results for Kimodo-SOMA-SEED-v1.1 and Kimodo-SOMA-RP-v1.1; accessed 2026-09-05}
}

@inproceedings{karunratanakul2023guided,
  title = {{{Guided Motion Diffusion for Controllable Human Motion Synthesis}}},
  author = {Karunratanakul, Korrawe and Preechakul, Konpat and Suwajanakorn, Supasorn and Tang, Siyu},
  booktitle = {Proceedings of the IEEE/CVF International Conference on Computer Vision},
  year = {2023},
  pages = {2151--2162},
  doi = {10.1109/ICCV51070.2023.00205},
}

@inproceedings{xie2024omnicontrol,
  title = {{{OmniControl: Control Any Joint at Any Time for Human Motion Generation}}},
  author = {Xie, Yiming and Jampani, Varun and Zhong, Lei and Sun, Deqing and Jiang, Huaizu},
  booktitle = {International Conference on Learning Representations},
  year = {2024},
  volume = {2024},
  pages = {28176--28194},
  url = {https://proceedings.iclr.cc/paper_files/paper/2024/hash/7827290f07f63485b849b66cfa3e5dd0-Abstract-Conference.html}
}

@inproceedings{dai2019transformerxl,
  title = {{Transformer-{XL}: Attentive Language Models beyond a Fixed-Length Context}},
  author = {Dai, Zihang and Yang, Zhilin and Yang, Yiming and Carbonell, Jaime and Le, Quoc V. and Salakhutdinov, Ruslan},
  booktitle = {Proceedings of the 57th Annual Meeting of the Association for Computational Linguistics},
  year = {2019},
  pages = {2978--2988},
  doi = {10.18653/v1/P19-1285},
  url = {https://aclanthology.org/P19-1285/},
  publisher = {Association for Computational Linguistics}
}

@inproceedings{mahmood2019amass,
  title = {{{AMASS: Archive of Motion Capture As Surface Shapes}}},
  author = {Mahmood, Naureen and Ghorbani, Nima and Troje, Nikolaus F. and Pons-Moll, Gerard and Black, Michael J.},
  booktitle = {2019 IEEE/CVF International Conference on Computer Vision (ICCV)},
  pages = {5441--5450},
  year = {2019},
  doi = {10.1109/ICCV.2019.00554},
}

@article{romero2017embodied,
  title = {{{Embodied Hands: Modeling and Capturing Hands and Bodies Together}}},
  author = {Romero, Javier and Tzionas, Dimitrios and Black, Michael J.},
  journal = {ACM Transactions on Graphics},
  volume = {36},
  number = {6},
  pages = {245:1--245:17},
  articleno = {245},
  year = {2017},
  doi = {10.1145/3130800.3130883},
}

@inproceedings{chung2023unimax,
  title = {{{UniMax}: Fairer and More Effective Language Sampling for Large-Scale Multilingual Pretraining}},
  author = {Chung, Hyung Won and Constant, Noah and Garcia, Xavier and Roberts, Adam and Tay, Yi and Narang, Sharan and Firat, Orhan},
  booktitle = {International Conference on Learning Representations},
  year = {2023},
  url = {https://openreview.net/forum?id=kXwdL1cWOAi}
}

@article{deleva1996adjustments,
  title = {{Adjustments to {Zatsiorsky--Seluyanov}'s Segment Inertia Parameters}},
  author = {de Leva, Paolo},
  journal = {Journal of Biomechanics},
  year = {1996},
  volume = {29},
  number = {9},
  pages = {1223--1230},
  doi = {10.1016/0021-9290(95)00178-6},
}

@inproceedings{tevet2022human,
  title = {{{Human Motion Diffusion Model}}},
  author = {Tevet, Guy and Raab, Sigal and Gordon, Brian and Shafir, Yonatan and Cohen-Or, Daniel and Bermano, Amit H.},
  booktitle = {International Conference on Learning Representations},
  year = {2023},
  url = {https://openreview.net/forum?id=SJ1kSyO2jwu}
}

@article{zhang2024motiondiffuse,
  title = {{{MotionDiffuse: Text-Driven Human Motion Generation With Diffusion Model}}},
  author = {Zhang, Mingyuan and Cai, Zhongang and Pan, Liang and Hong, Fangzhou and Guo, Xinying and Yang, Lei and Liu, Ziwei},
  journal = {IEEE Transactions on Pattern Analysis and Machine Intelligence},
  year = {2024},
  volume = {46},
  number = {6},
  pages = {4115--4128},
  doi = {10.1109/TPAMI.2024.3355414},
  publisher = {IEEE}
}

@inproceedings{zhang2023generating,
  title = {{{Generating Human Motion from Textual Descriptions with Discrete Representations}}},
  author = {Zhang, Jianrong and Zhang, Yangsong and Cun, Xiaodong and Zhang, Yong and Zhao, Hongwei and Lu, Hongtao and Shen, Xi and Shan, Ying},
  booktitle = {Proceedings of the IEEE/CVF Conference on Computer Vision and Pattern Recognition},
  year = {2023},
  pages = {14730--14740},
  doi = {10.1109/CVPR52729.2023.01415},
}

@inproceedings{guo2022tm2t,
  title = {{{TM2T: Stochastic and Tokenized Modeling for the Reciprocal Generation of 3D Human Motions and Texts}}},
  author = {Guo, Chuan and Zuo, Xinxin and Wang, Sen and Cheng, Li},
  booktitle = {Computer Vision -- ECCV 2022},
  year = {2022},
  volume = {13695},
  pages = {580--597},
  doi = {10.1007/978-3-031-19833-5_34},
  url = {https://link.springer.com/chapter/10.1007/978-3-031-19833-5_34},
  series = {Lecture Notes in Computer Science},
  publisher = {Springer Nature Switzerland},
  address = {Cham}
}

@inproceedings{zhaodartcontrol,
  title = {{{DartControl: A Diffusion-Based Autoregressive Motion Model for Real-Time Text-Driven Motion Control}}},
  author = {Zhao, Kaifeng and Li, Gen and Tang, Siyu},
  booktitle = {International Conference on Learning Representations},
  year = {2025},
  volume = {2025},
  pages = {23569--23592},
  url = {https://proceedings.iclr.cc/paper_files/paper/2025/hash/3b057de5a2e38bd8fa10201866c20dbf-Abstract-Conference.html}
}

@inproceedings{guo2022generating,
  title = {{{Generating Diverse and Natural 3D Human Motions from Text}}},
  author = {Guo, Chuan and Zou, Shihao and Zuo, Xinxin and Wang, Sen and Ji, Wei and Li, Xingyu and Cheng, Li},
  booktitle = {Proceedings of the IEEE/CVF Conference on Computer Vision and Pattern Recognition},
  year = {2022},
  pages = {5142--5151},
  doi = {10.1109/CVPR52688.2022.00509},
}

@inproceedings{zhang2023remodiffuse,
  title = {{{ReMoDiffuse: Retrieval-Augmented Motion Diffusion Model}}},
  author = {Zhang, Mingyuan and Guo, Xinying and Pan, Liang and Cai, Zhongang and Hong, Fangzhou and Li, Huirong and Yang, Lei and Liu, Ziwei},
  booktitle = {Proceedings of the IEEE/CVF International Conference on Computer Vision},
  year = {2023},
  pages = {364--373},
  doi = {10.1109/ICCV51070.2023.00040},
}

@inproceedings{guo2024momask,
  title = {{{MoMask: Generative Masked Modeling of 3D Human Motions}}},
  author = {Guo, Chuan and Mu, Yuxuan and Javed, Muhammad Gohar and Wang, Sen and Cheng, Li},
  booktitle = {Proceedings of the IEEE/CVF Conference on Computer Vision and Pattern Recognition},
  year = {2024},
  pages = {1900--1910},
  doi = {10.1109/CVPR52733.2024.00186},
}

@inproceedings{xiao2025motionstreamer,
  title = {{{MotionStreamer: Streaming Motion Generation via Diffusion-Based Autoregressive Model in Causal Latent Space}}},
  author = {Xiao, Lixing and Lu, Shunlin and Pi, Huaijin and Fan, Ke and Pan, Liang and Zhou, Yueer and Feng, Ziyong and Zhou, Xiaowei and Peng, Sida and Wang, Jingbo},
  booktitle = {Proceedings of the IEEE/CVF International Conference on Computer Vision},
  year = {2025},
  pages = {10086--10096},
  doi = {10.1109/ICCV51701.2025.00940},
}

@inproceedings{zhang2025primal,
  title = {{{PRIMAL: Physically Reactive and Interactive Motor Model for Avatar Learning}}},
  author = {Zhang, Yan and Feng, Yao and Cseke, Alp{\'a}r and Saini, Nitin and Bajandas, Nathan and Heron, Nicolas and Black, Michael J.},
  booktitle = {Proceedings of the IEEE/CVF International Conference on Computer Vision},
  year = {2025},
  pages = {12725--12736},
  doi = {10.1109/ICCV51701.2025.01182},
}

@inproceedings{liu2025rolling,
  title = {{Rolling Forcing: Autoregressive Long Video Diffusion in Real Time}},
  author = {Liu, Kunhao and Hu, Wenbo and Xu, Jiale and Shan, Ying and Lu, Shijian},
  booktitle = {International Conference on Learning Representations},
  year = {2026},
  volume = {2026},
  pages = {91177--91196},
  url = {https://proceedings.iclr.cc/paper_files/paper/2026/hash/935151cc6cb5d8b6816133b75233775a-Abstract-Conference.html}
}

@inproceedings{huang2025self,
  title = {{Self Forcing: Bridging the Train-Test Gap in Autoregressive Video Diffusion}},
  author = {Huang, Xun and Li, Zhengqi and He, Guande and Zhou, Mingyuan and Shechtman, Eli},
  booktitle = {Advances in Neural Information Processing Systems},
  year = {2025},
  volume = {38},
  pages = {167283--167308},
  doi = {10.52202/085713-5576},
  url = {https://proceedings.neurips.cc/paper_files/paper/2025/hash/f4823f831af67a3ef15e41a85434422a-Abstract-Conference.html},
  publisher = {Curran Associates, Inc.},
  note = {Main Conference Track}
}

@inproceedings{chen2024diffusion,
  title = {{Diffusion Forcing: Next-token Prediction Meets Full-Sequence Diffusion}},
  author = {Chen, Boyuan and Mart{\'\i} Mons{\'o}, Diego and Du, Yilun and Simchowitz, Max and Tedrake, Russ and Sitzmann, Vincent},
  booktitle = {Advances in Neural Information Processing Systems},
  year = {2024},
  volume = {37},
  pages = {24081--24125},
  doi = {10.52202/079017-0759},
  url = {https://proceedings.neurips.cc/paper_files/paper/2024/hash/2aee1c4159e48407d68fe16ae8e6e49e-Abstract-Conference.html},
  publisher = {Curran Associates, Inc.}
}

@inproceedings{zhou2019continuity,
  title = {{On the Continuity of Rotation Representations in Neural Networks}},
  author = {Zhou, Yi and Barnes, Connelly and Lu, Jingwan and Yang, Jimei and Li, Hao},
  booktitle = {2019 IEEE/CVF Conference on Computer Vision and Pattern Recognition (CVPR)},
  year = {2019},
  pages = {5738--5746},
  doi = {10.1109/CVPR.2019.00589},
  publisher = {IEEE}
}

@inproceedings{chen2023executing,
  title = {{{Executing your Commands via Motion Diffusion in Latent Space}}},
  author = {Chen, Xin and Jiang, Biao and Liu, Wen and Huang, Zilong and Fu, Bin and Chen, Tao and Yu, Gang},
  booktitle = {2023 IEEE/CVF Conference on Computer Vision and Pattern Recognition (CVPR)},
  pages = {18000--18010},
  year = {2023},
  doi = {10.1109/CVPR52729.2023.01726},
  eprint = {2212.04048},
  archivePrefix = {arXiv}
}

@article{banerjee2005optimality,
  title = {{On the Optimality of Conditional Expectation as a {Bregman} Predictor}},
  author = {Banerjee, Arindam and Guo, Xin and Wang, Hui},
  journal = {IEEE Transactions on Information Theory},
  year = {2005},
  volume = {51},
  number = {7},
  pages = {2664--2669},
  doi = {10.1109/TIT.2005.850145},
}

@article{gneiting2011making,
  title = {{Making and Evaluating Point Forecasts}},
  author = {Gneiting, Tilmann},
  journal = {Journal of the American Statistical Association},
  year = {2011},
  volume = {106},
  number = {494},
  pages = {746--762},
  doi = {10.1198/jasa.2011.r10138},
}
}

\clearpage
\appendix

\section{Partial Attention}
\label{sec:app_theory}

This section supports the Partial Attention construction in
\Cref{sec:partial}. We first recall FD1's streaming-locality result and
examine the information available to active predictions. We then establish
conditions for reusing finalized-history keys and values, followed by the
equivalence between shared-history packing and isolated training windows.

\subsection{Complete Active Information and Probability Flow}
\label{sec:proof_partial_diffusion}

Notation follows FloodDiffusion~\citep{cai2025flooddiffusion}. Let
$\bc^{0:T}$ collect the frame-aligned clean conditions of a sampling
trajectory, and let $\bz\in\mathbb R^{T\times d}$ be the clean diffused
sequence, $\bz\sim p_{\mathrm{data}}(\cdot\mid\bc^{0:T})$, with $d$ diffused
channels per frame. With $\bepsilon\sim\mathcal N(\mathbf 0,\bI)$
independent of $(\bz,\bc^{0:T})$, the streaming interpolation for frame
$k$ is
\begin{equation}
    \bx_t^k=\alpha_t^k\bz^k+\beta_t^k\bepsilon^k ,
    \qquad
    \dot{\bx}_t^k=\dot{\alpha}_t^k\bz^k+
                  \dot{\beta}_t^k\bepsilon^k .
    \label{eq:app_path}
\end{equation}
The frame-dependent coefficients follow the lower-triangular schedule
\begin{equation}
    \alpha_t^k=\mathrm{clamp}\bigl(t-\tfrac{k}{n_s},\,0,\,1\bigr),
    \qquad
    \beta_t^k=1-\alpha_t^k .
    \label{eq:app_tri_schedule}
\end{equation}
They induce marginals $p_t=\operatorname{Law}(\bx_t\mid\bc^{0:T})$.
Away from the schedule knots, finalized history, the active window, and
future are $H=[0,m(t))$, $A=[m(t),n(t))$, and $F=[n(t),T)$, where
\begin{equation}
\begin{aligned}
    m(t)&=\min\!\left(T,\max\!\left(0,\lfloor(t-1)n_s\rfloor+1\right)\right),\\
    n(t)&=\min\!\left(T,\lfloor tn_s\rfloor+1\right).
\end{aligned}
\label{eq:app_region_indices}
\end{equation}

Let $u_t(\bx,\bc^{0:T})$ be the marginal velocity field of this path.

\begin{assumption}[Causal Dependency; Assumption~3.7 of
\citealp{cai2025flooddiffusion}]
\label[assumption]{ass:app_causal_controls}
As in the streaming formulation of FloodDiffusion, a state prefix depends
only on the corresponding condition prefix:
\begin{equation}
    p_{\mathrm{data}}(\bz^{0:l}\mid\bc^{0:T})
    =p_{\mathrm{data}}(\bz^{0:l}\mid\bc^{0:l}),
    \qquad l<T.
    \label{eq:app_causal_controls}
\end{equation}
\end{assumption}

\begin{theorem}[Streaming Locality; Theorem~3.8 of
\citealp{cai2025flooddiffusion}]
\label{thm:app_streaming_locality}
Under \Cref{ass:app_causal_controls} and the schedule of
\Cref{eq:app_tri_schedule}, away from the finitely many knots,
\begin{equation}
    u_t(\bx_t,\bc^{0:T})
    =\bigl[\mathbf 0_H,\
      u_t^{m(t):n(t)}\bigl(\bx_t^{0:n(t)},\bc^{0:n(t)}\bigr),\
      \mathbf 0_F\bigr].
    \label{eq:app_exact_velocity}
\end{equation}
The field vanishes outside the active window, and its active block
depends only on the processed prefix.
\end{theorem}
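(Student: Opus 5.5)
The proof writes the marginal velocity as a conditional expectation and splits it into three blocks: history, active, and future. Away from the knots, the schedule \Cref{eq:app_tri_schedule} fixes $\alpha_t^k$ and $\beta_t^k$ as follows:
\begin{itemize}
\item on $H$: $\alpha_t^k=1$ and $\beta_t^k=0$;
\item on $F$: $\alpha_t^k=0$ and $\beta_t^k=1$;
\item on $A$: $\alpha_t^k=t-k/n_s$, with $\dot\alpha_t^k=1$ and $\dot\beta_t^k=-1$.
\end{itemize}
Hence in a neighbourhood of $t$ the coefficients are locally constant on $H\cup F$, so the sampled velocity $\dot{\bx}_t^k$ in \Cref{eq:app_path} vanishes identically there. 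Since $u_t(\bx,\bc^{0:T})=\mathbb E[\dot{\bx}_t\mid\bx_t=\bx,\bc^{0:T}]$, the $H$ and $F$ blocks of $u_t$ are zero. This gives the $\mathbf 0_H$ and $\mathbf 0_F$ entries of \Cref{eq:app_exact_velocity}. We also verify that the index formulas \Cref{eq:app_region_indices} pick out exactly the frames with $\alpha=1$, $0<\alpha<1$, and $\alpha=0$; this is a floor computation, where away from the knots $tn_s\notin\mathbb Z$.

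For the active block we must show
\[
\mathbb E[\bz^A-\bepsilon^A\mid \bx_t^{0:T},\bc^{0:T}]
=\mathbb E[\bz^A-\bepsilon^A\mid \bx_t^{0:n},\bc^{0:n}],\qquad n=n(t).
\]
On $F$ we have $\bx_t^F=\bepsilon^F$. The future noise $\bepsilon^F$ is independent of $(\bz,\bepsilon^{0:n},\bc^{0:T})$, because the noise is i.i.d. across frames and independent of the data. Therefore conditioning additionally on $\bx_t^F$ adds no information about $(\bz^A,\bepsilon^A)$, and it can be dropped.

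Next we remove the dependence on $\bc^{n:T}$. The triple $(\bz^{0:n},\bepsilon^{0:n},\bx_t^{0:n})$ is a measurable function of $\bz^{0:n}$ and of noise that is independent of the conditions. By \Cref{ass:app_causal_controls} with $l=n-1$, its joint law given $\bc^{0:T}$ equals its law given $\bc^{0:n}$. The corresponding conditional expectation, computed via Bayes' rule with the Gaussian likelihood of $\bx_t^{0:n}$ given $\bz^{0:n}$, therefore depends only on $(\bx_t^{0:n},\bc^{0:n})$. The target is an integrable function of $(\bz^A,\bepsilon^A)$, so this yields the stated form $u_t^{m:n}(\bx_t^{0:n},\bc^{0:n})$.

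The main obstacle is handling conditioning on measure-zero events rigorously. On $H$, $\bx_t^H=\bz^H$ exactly, so the conditional law is degenerate there, and the argument needs versions of the conditional expectations that agree almost surely. We will state the result as equality of $p_t$-a.e.\ versions and invoke the independence/substitution lemma for conditional expectations instead of densities. We also need differentiability of the coefficients at the chosen $t$; this is why the finitely many knots $t\in\{k/n_s,\,1+k/n_s\}$ are excluded.
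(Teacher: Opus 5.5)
The paper does not prove this statement. It imports it from FD1 (Theorem~3.8 there) and only applies it, so there is no in-paper argument to compare against. Your proposal is correct and supplies the natural self-contained proof.

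Your three steps all hold:
\begin{itemize}
\item Away from the knots the clamp is locally constant on $H\cup F$, so $\dot{\bx}_t$ vanishes identically on those blocks.
\item On $F$ you have $\bx_t^F=\bepsilon^F$, and this is independent of $(\bz,\bepsilon^{0:n},\bc^{0:T})$, so it can be dropped from the conditioning.
\item The causal assumption, together with independence of $\bepsilon$ from $(\bz,\bc^{0:T})$, removes the future conditions.
\end{itemize}

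For the third step there is a cleaner route than Bayes' rule. The two facts give that $(\bz^{0:n},\bepsilon^{0:n})$ is conditionally independent of $\bc^{n:T}$ given $\bc^{0:n}$. Since $\bx_t^{0:n}$ and $(\bz^A,\bepsilon^A)$ are functions of that pair, the weak-union property yields that $(\bz^A,\bepsilon^A)$ is conditionally independent of $\bc^{n:T}$ given $(\bx_t^{0:n},\bc^{0:n})$. This is the precise form of your ``substitution lemma'' step. It needs no densities, so it avoids the degenerate Dirac likelihood on $H$ that you flagged.

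There is one indexing slip. The paper's slices are half-open: $\phi_k$ reads rows $0{:}k{+}1$, and $A=[m(t),n(t))$. To cover rows $0,\dots,n(t)-1$ you should apply \Cref{ass:app_causal_controls} with $l=n(t)$ when $n(t)<T$. When $n(t)=T$ there are no future conditions to drop. Using $l=n-1$, as you wrote, would leave out the last active row.
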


We abbreviate the function $u_t^{m(t):n(t)}$ as $u_t^{A}$.

The following propositions justify causal history encoding while retaining
the complete active window as input.

\begin{proposition}[Causal history encoding can preserve complete prefix information]
\label[proposition]{prop:app_causal_encoding}
Fix $t$. There exist a causal history encoder
$\phi=(\phi_0,\ldots,\phi_{m(t)-1})$, whose component $\phi_k$ reads only
the rows $0{:}k{+}1$, and a measurable readout $g$ with
\begin{equation}
    u_t^{A}\bigl(\bx_t^{0:n(t)},\bc^{0:n(t)}\bigr)
    =g\bigl(\phi\bigl(\bx_t^{0:m(t)},\bc^{0:m(t)}\bigr),\
      \bx_t^{A},\bc^{A}\bigr).
    \label{eq:app_structured_form}
\end{equation}
Since history rows are saturated, $\bx_t^{0:m(t)}=\bz^{0:m(t)}$: the
encoder runs on the finalized clean prefix.
\end{proposition}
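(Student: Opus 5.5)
The plan is to start from \Cref{thm:app_streaming_locality}, which we take as given. It tells us that the active block $u_t^{A}$ is a measurable function of the processed prefix $(\bx_t^{0:n(t)},\bc^{0:n(t)})$. We split this prefix into the history part $(\bx_t^{0:m(t)},\bc^{0:m(t)})$ and the active part $(\bx_t^{A},\bc^{A})$. It then suffices to build a causal encoder $\phi$ whose output determines the history part exactly, since $g$ can be obtained by composing $u_t^A$ with a measurable left inverse of $\phi$.

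The construction is the trivial lossless one. Set
\[
\phi_k\bigl(\bx_t^{0:k+1},\bc^{0:k+1}\bigr)=\bigl(\bx_t^{k},\bc^{k}\bigr),
\qquad k<m(t).
\]
This component reads only row $k$, so in particular it reads only rows $0{:}k{+}1$, and the causality requirement holds. One could also take the cumulative choice $\phi_k=(\bx_t^{0:k+1},\bc^{0:k+1})$, which is closer to what attention computes and is still causal. The concatenated output $\phi(\bx_t^{0:m(t)},\bc^{0:m(t)})$ is then the identity, up to a coordinate permutation, on the history rows. Defining
\[
g(h,\bx^A,\bc^A)=u_t^{A}(h,\bx^A,\bc^A),
\]
with $h$ reinterpreted as the stacked history rows, yields \Cref{eq:app_structured_form}. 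Measurability of $g$ follows because $u_t^A$ is a (version of a) conditional expectation, hence Borel, and the repacking is a fixed coordinate map. If the cumulative variant is used, $g$ first projects onto the last component $\phi_{m(t)-1}$, which is again measurable.

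For the final sentence of the statement, I would note that for $k<m(t)$, \Cref{eq:app_region_indices} and \Cref{eq:app_tri_schedule} give $t-k/n_s\ge 1$ away from the knots. Hence $\alpha_t^k=1$ and $\beta_t^k=0$, so $\bx_t^k=\bz^k$ by \Cref{eq:app_path}. The encoder therefore acts on the clean prefix, and it does not depend on $\bepsilon^{0:m(t)}$ or on any later row.

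The main subtlety is not the construction but the bookkeeping. First, the identities should be read almost surely or for the chosen Borel version of $u_t^A$. Second, the index conventions near the boundary need care: $m(t)=0$ (empty history, where $\phi$ is vacuous) and $n(t)=T$. I would handle these two boundary cases explicitly, restrict to $t$ away from the finitely many knots as in \Cref{thm:app_streaming_locality}, and remark that the proposition is an existence statement: it shows that causal history encoding loses no information, while a learned transformer encoder only needs to approximate such a $\phi$.
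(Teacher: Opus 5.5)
Your proposal is correct and matches the paper's proof. The paper also uses the causal projections $\phi_k(\bx_t^{0:k+1},\bc^{0:k+1})=(\bx_t^k,\bc^k)$ and sets $g=u_t^A\circ\rho$, where $\rho$ restores the original arrangement of the prefix; your extra checks (that $k<m(t)$ forces $\alpha_t^k=1$, measurability of $g$, and the empty-history case) are sound details the paper leaves implicit.
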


\begin{proof}
Take the causal projections
$\phi_k(\bx_t^{0:k+1},\bc^{0:k+1})=(\bx_t^k,\bc^k)$.
Their outputs and the active rows retain the entire processed prefix.
If $\rho$ restores its original arrangement, $g=u_t^{A}\circ\rho$.
\end{proof}

\begin{proposition}[Fully causal active predictions can be suboptimal]
\label[proposition]{prop:app_mask_necessity}
Fix a time $t$ away from the schedule knots with two active rows $j<k$.
There exists a data law satisfying \Cref{ass:app_causal_controls} under which
every predictor whose output at row $j$ reads only the rows $0{:}j{+}1$
has expected squared error exceeding that of the full-information
conditional mean $u_t^j$ by a positive constant independent of the predictor.
\end{proposition}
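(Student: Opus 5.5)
The plan is to build an explicit Gaussian counterexample in which the clean value at row $j$ is correlated with the clean value at the later active row $k$, and show that this correlation leaks information about $\bv_t^j$ through $\bx_t^k$.

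\textbf{Construction.} Take no conditions (constant $\bc$), so \Cref{ass:app_causal_controls} holds trivially. Take $d=1$ and let all frames other than $j,k$ be deterministic zeros. Let $(\bz^j,\bz^k)$ be jointly Gaussian with unit variances and correlation $r\neq 0$, for instance $\bz^k=\bz^j$.

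\textbf{Reduction to a variance gap.} At the fixed time $t$, rows $j$ and $k$ are active, so $0<\alpha_t^j,\alpha_t^k<1$. The velocity at row $j$ is $\bv_t^j=\bz^j-\bepsilon^j$.
\begin{itemize}
\item A predictor reading only rows $0{:}j{+}1$ observes, beyond deterministic history, only $\bx_t^j$. Its best squared error is therefore $\mathbb E\,\mathrm{Var}(\bv_t^j\mid \bx_t^j)$.
\item The full-information conditional mean $u_t^j$ attains $\mathbb E\,\mathrm{Var}(\bv_t^j\mid \bx_t^j,\bx_t^k)$.
\item Every admissible predictor is measurable with respect to $\sigma(\bx_t^{0:j+1})\subseteq\sigma(\bx_t^{0:n(t)})$. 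Pythagoras then gives that its excess error is at least
\[
\mathbb E\bigl[(\mathbb E[\bv_t^j\mid\bx_t^{0:j+1}]-u_t^j)^2\bigr],
\]
which does not depend on the predictor.
\end{itemize}
So it suffices to show this quantity is strictly positive, i.e. that $u_t^j$ is not a.s.\ a function of $\bx_t^j$ alone.

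\textbf{Gaussian computation.} Everything is jointly Gaussian, so the conditional means are linear and the gap equals the difference of two Gaussian conditional variances. I will write
\[
\bx_t^k=\alpha_t^k\bz^k+\beta_t^k\bepsilon^k .
\]
I will then show that $\mathrm{Cov}(\bv_t^j,\bx_t^k\mid\bx_t^j)\neq0$. Take $\bz^k=\bz^j=:z$, and write $a=\alpha_t^j$, $b=\beta_t^j$.
\begin{itemize}
\item The residual of $\bv_t^j=z-\bepsilon^j$ after regressing on $\bx_t^j=az+b\bepsilon^j$ is proportional to $bz-a\bepsilon^j$, because $\bv_t^j$ and $\bx_t^j$ span the same plane in $(z,\bepsilon^j)$.
\item Its covariance with $\bx_t^k$ is $\alpha_t^k b\neq0$, since both $\alpha_t^k>0$ and $b>0$ on the active window.
\end{itemize}
The partial covariance is therefore nonzero. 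The gap equals $\mathrm{Cov}(\cdot\mid\cdot)^2/\mathrm{Var}(\bx_t^k\mid\bx_t^j)>0$, a constant determined by the data law and $t$.

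\textbf{Main obstacle.} The delicate point is the bookkeeping rather than the algebra. I need to check three things:
\begin{itemize}
\item The data law is a legitimate $p_{\mathrm{data}}(\cdot\mid\bc^{0:T})$ satisfying \Cref{ass:app_causal_controls}, which holds since the conditions are trivial.
\item ``Reads only rows $0{:}j{+}1$'' indeed means measurability with respect to those inputs. History rows equal deterministic zeros here, so they contribute no information.
\item The strict-positivity constant is independent of the predictor. This follows from the orthogonal decomposition in the reduction step.
\end{itemize}
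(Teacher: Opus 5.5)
Your proposal is correct and is essentially the paper's proof. Both use a Gaussian pair $(\bz^j,\bz^k)$ under trivial conditions. The ramp identity $\alpha_t^j+\beta_t^j=1$ (the paper's $\dot\alpha_j\beta_j-\dot\beta_j\alpha_j=1$) makes the partial covariance proportional to $\alpha_t^k\beta_t^j\neq0$. The predictor-independent projection bound is then $b^2\operatorname{Var}(\bx_t^k\mid\bx_t^j)$. Your choice $\bz^k=\bz^j$ instead of $0<|\rho|<1$ is harmless, since the independent noise $\beta_t^k\bepsilon^k$ keeps that conditional variance positive.

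One wording fix is needed. The causal prefix also contains the active rows $m(t),\ldots,j-1$. In your construction these are not deterministic; they equal the pure noise $\beta_t^i\bepsilon^i$. They are independent of $(\bz^j,\bepsilon^j,\bepsilon^k)$, so $\mathbb E[\bv_t^j\mid\bx_t^{0:j+1}]=\mathbb E[\bv_t^j\mid\bx_t^j]$ still holds. This is exactly how the paper disposes of the extra prefix rows.
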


\begin{proof}
Take all conditions to be deterministic. Let $(z^j,z^k)$ be jointly
Gaussian with zero means, positive variances $\sigma_j^2,\sigma_k^2$, and
correlation $0<|\rho|<1$. All other clean coordinates are independent of
this pair and of each other; all Gaussian noise coordinates are independent
of each other and of the clean data. This law satisfies
\Cref{ass:app_causal_controls}. Both selected rows lie on the open ramp,
and the other processed rows provide no additional information about their
velocities. For this fixed $t$, write $\alpha_i=\alpha_t^i$ and
$\beta_i=\beta_t^i$, with the same convention for their derivatives. By Gaussian regression,
$u_t^j=\mathbb E[\dot x_t^j\mid x_t^j,x_t^k]=a\,x_t^j+b\,x_t^k$, with
\begin{equation}
    b=\frac{\operatorname{Cov}(\dot x_t^j,x_t^k\mid x_t^j)}
           {\operatorname{Var}(x_t^k\mid x_t^j)},
    \qquad
    \operatorname{Cov}(\dot x_t^j,x_t^k\mid x_t^j)
    =\frac{\alpha_k\rho\sigma_j\sigma_k\,\beta_j
      (\dot\alpha_j\beta_j-\dot\beta_j\alpha_j)}
      {\alpha_j^2\sigma_j^2+\beta_j^2}.
    \label{eq:app_gaussian_dependence}
\end{equation}
On the open ramp of \Cref{eq:app_tri_schedule},
$\dot\alpha_j\beta_j-\dot\beta_j\alpha_j=1$, so $b\neq0$.
For a predictor $f$ causal at row $j$, the additional rows in its prefix
are independent of $(x_t^j,x_t^k,\dot x_t^j)$. Projection onto functions
of that prefix therefore gives
\begin{equation}
    \mathbb E\bigl[(f(\bx_t^{0:j+1})-\dot x_t^j)^2\bigr]
    -\mathbb E\bigl[(u_t^j-\dot x_t^j)^2\bigr]
    \geq b^2\operatorname{Var}(x_t^k\mid x_t^j)>0 .
    \label{eq:app_masked_excess}
\end{equation}
For vector-valued frames, embed this construction in one coordinate and
take all remaining coordinates to be independent.
\end{proof}

\subsection{History-Prefix Invariance}
\label{sec:proof_partial}

Consider a layered computation with Partial Attention and row-wise
normalization, residual, and feed-forward operations.

\begin{lemma}[Layer-wise suffix independence]
\label[lemma]{lem:history_suffix}
Assume that the input embedding of each history row $i$ depends only on
rows $0{:}i{+}1$, and that conditioning operations, including cross-attention,
read only the conditions attached to that row.
Fix a history position $j$ and two inputs that agree on every row $i\le j$:
state tokens, noise metadata, position IDs, and per-row conditions. Under
Partial Attention, the hidden state and the layer-wise keys and values at
$j$ are identical at every layer in deterministic evaluation, whatever the
later suffix. They also equal those of an inclusive causal execution with
the same prefix inputs.
\end{lemma}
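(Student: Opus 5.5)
We argue by induction on the layer index $\ell$, with the stronger hypothesis that for \emph{every} history position $i\le j$ the hidden state $h_i^{(\ell)}$ agrees between the two inputs. Strengthening from $j$ alone to all history rows $i\le j$ is needed because attention at $j$ reads the whole prefix $0{:}j{+}1$. Every row $i\le j$ is a history row, since $H=[0,m(t))$ is an initial segment containing $j$.

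\textbf{Base case.} The input embedding of a history row $i$ depends only on rows $0{:}i{+}1$. These rows carry identical state tokens, noise metadata, position IDs, and per-row conditions in the two inputs, so $h_i^{(0)}$ coincides for all $i\le j$.

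\textbf{Inductive step.} Suppose $h_i^{(\ell)}$ coincides for all $i\le j$. We split layer $\ell{+}1$ into its row-wise and mixing parts.

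1. The row-wise operations (normalization, residual, feed-forward, the query/key/value projections, and cross-attention to the row's own conditions) act on each row separately. Each therefore produces identical outputs at every $i\le j$; in particular the queries, keys, and values at these rows agree.

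2. For self-attention at a history query $i\le j$, \Cref{eq:partial_mask} gives $\bA^{\mathrm p}_{i,k}=1$ exactly for $k\le i$. Masked keys are excluded before the softmax, through $-\infty$ logits or an explicit restriction of the index set. Hence the row-wise softmax normalizer and the weighted sum involve only keys and values at positions $k\le i\le j$. By step 1 these agree, and so does the positional encoding, since position IDs agree on the prefix. The attention output at $i$ is therefore identical.

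3. Composing with the subsequent row-wise operations yields $h_i^{(\ell+1)}$ equal for all $i\le j$. The layer-wise keys and values at $j$ are projections of $h_j^{(\ell)}$, so they coincide as well.

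\textbf{Inclusive causal execution.} We view the causal execution as a third input, namely the prefix $0{:}j{+}1$ alone under the mask $k\le i$. The same induction applies verbatim, because on rows $i\le j$ this mask coincides with the Partial Attention rows for history queries. Consequently its hidden states, keys, and values at $j$ match those of Partial Attention.

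\textbf{Main obstacle.} The delicate point is making sure that no operation secretly mixes rows. Three things must be checked:

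1. Normalization must be per-row, not batch or sequence normalization.
2. The masked softmax must assign exactly zero weight to suffix positions, so that suffix contents cannot enter through the normalizer.
3. Positional encodings must depend only on a row's own position ID. With relative encodings such as RoPE, the query–key pair uses only the two rows' IDs, both of which lie in the prefix.

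We state these as the operational meaning of ``row-wise'' and ``deterministic evaluation''. The latter excludes dropout and nondeterministic kernels, so agreement can be exact rather than merely in distribution.
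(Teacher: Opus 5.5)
Your proof is correct and takes the same route as the paper. Both argue by induction over layers that agreement on rows $0{:}j{+}1$ is preserved, because history queries see only keys $k\le i\le j$, cross-attention reads only the row's own conditions, and every other operation acts row-wise; both treat the inclusive causal execution as having the same visible sets on these rows. Your explicit strengthening of the hypothesis to all rows $i\le j$, and your checks on per-row normalization, exactly zero masked weights, and positional encodings, spell out what the paper's one-paragraph proof leaves implicit.
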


\begin{proof}
Induction over layers. A history query $i\le j$ reads only keys
$0{:}i{+}1\subseteq0{:}j{+}1$, its cross-attention reads only the condition
attached to row $i$, and normalization, residual, and feed-forward act
row-wise; so equality of rows $0{:}j{+}1$ before a layer implies equality
after it, and the input layer is equal by hypothesis. Inclusive causal
attention has the same visible sets on these history rows.
\end{proof}

Thus finalized-history K/V can be reused while new rows are encoded; a
prompt switch affecting only later rows preserves the cache.

\subsection{Multi-Window Packing}
\label{sec:proof_multiwindow}

For $K$ copied windows $A_1,\ldots,A_K$ with start rows
$s_1,\ldots,s_K$ over one reference sequence $C$, the packed mask is
\begin{equation}
    \mathcal V(q)=
    \begin{cases}
        C_{\le q}, & q\in C,\\
        C_{<s_w}\cup A_w, & q\in A_w ,
    \end{cases}
    \label{eq:app_packed_visible}
\end{equation}
so no window reads another, even when their source ranges overlap.

\begin{proposition}[Packed/isolated equivalence]
\label[proposition]{prop:app_packed}
With untruncated history, deterministic evaluation (or coupled stochastic
masks), and identical reference content, target content, metadata, and
noise draw between each packed window and its isolated execution, every
prediction on $A_w$ equals its isolated prediction in exact arithmetic.
\end{proposition}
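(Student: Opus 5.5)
The plan is to reduce the claim to \Cref{lem:history_suffix}, applied to the reference rows, followed by a layer-wise induction on the window rows. We compare the packed execution with the isolated execution of window $A_w$. The isolated execution consists of the clean history rows $C_{<s_w}$ followed by the noisy rows $A_w$, under the Partial Attention mask \Cref{eq:partial_mask}: history rows are causal, and active rows read $H\cup A$.

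The first step is a correspondence between token sets. We map each packed reference row $c\in C_{<s_w}$ to the isolated history row with the same index. We also map each packed copy of a row in $A_w$ to the isolated active row, keeping the same position ID, noise metadata, and frame condition. Under \Cref{eq:app_packed_visible}, a packed query in $A_w$ sees exactly the image of the isolated active visible set $H\cup A=C_{<s_w}\cup A_w$. A packed reference query $q<s_w$ sees $C_{\le q}$, which is the image of its isolated causal visible set.

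Next we handle the reference rows $q\ge s_w$. In the packed sequence these correspond to rows that are active in the isolated window. They never appear in the visible set of any row we track, because window queries read only $C_{<s_w}$ and reference queries read only their own prefix.

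The main step is an induction over layers. The inductive claim is that the hidden states, and hence the keys and values, of the packed rows in $C_{<s_w}\cup A_w$ equal those of their isolated counterparts. At the input layer this holds because of the hypothesis of identical reference content, target content, metadata, and noise draw. We also need untruncated history, since truncation would change $C_{<s_w}$. In addition, the per-row embedding and conditioning must depend only on allowed rows, exactly as assumed in \Cref{lem:history_suffix}.

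For the attention sublayer, each tracked query has the same visible key set in both executions and the same keys and values by the inductive hypothesis. The masked softmax renormalizes only over the visible entries, so the outputs agree. Rows of other windows, and reference rows at or beyond $s_w$, receive zero weight after masking. In exact arithmetic, masked positions contribute exactly nothing rather than an infinitesimal amount, so their differing contents are irrelevant.

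For the reference rows specifically, the same conclusion follows directly from \Cref{lem:history_suffix}. Those rows are causal, so their states depend only on their prefix, whatever other windows are appended. Cross-attention, normalization, residual, and feed-forward layers act row-wise and read only the row's own condition, so they preserve the equality. The final readout is also row-wise, which gives equality of the predictions on $A_w$.

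The obstacle I expect is bookkeeping rather than conceptual. First, position encodings must depend on the assigned position IDs and not on the packed sequence order; rotary-type encodings satisfy this because they are relative in the assigned IDs. Second, stochastic components such as dropout must be coupled between the two executions, which is why the statement allows coupled masks. Third, overlapping source ranges of different windows must not alias: each copy is a distinct token that is visible only to its own window. I would state these three points as explicit checks inside the inductive step.
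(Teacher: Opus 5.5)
Your proposal is correct and follows essentially the same route as the paper. The reference rows are handled by \Cref{lem:history_suffix}, since they are computed causally and see no window. A layer-wise induction over the packed visible sets of \Cref{eq:app_packed_visible} then identifies each query in $A_w$ with its isolated counterpart. Your explicit checks (exactly zero weight on masked entries, assigned position IDs, coupled dropout, and non-aliasing of overlapping copies) are the bookkeeping the paper's proof leaves implicit.
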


\begin{proof}
Reference rows compute their states causally, unaffected by
every packed target (\Cref{lem:history_suffix}). A query in $A_w$
therefore sees exactly the computation graph of its isolated call
(\Cref{eq:app_packed_visible}) and has no path to another window; a
layer-wise induction equates the hidden states on $A_w$.
\end{proof}

Let $\mathcal I_w\subseteq A_w$ contain the valid prediction rows of copy
$w$, and let $N_w=|\mathcal I_w|$. For $N_w>0$, define its mean per-row
quadratic loss by
\[
    L_w=\frac{1}{N_w}\sum_{i\in\mathcal I_w}
        \frac{1}{d}\mathbf e_{w,i}^{\top}W_\gamma\mathbf e_{w,i},
\]
where $\mathbf e_{w,i}$ is the prediction residual at row $i$ of copy $w$,
with $W_\gamma$ from \Cref{eq:mesh_quadratic}.
Windows with no valid prediction rows are omitted from the loss.
Provided $\sum_{w=1}^{K}N_w>0$, the packed objective is
\begin{equation}
    L_{\mathrm{packed}}
    =\frac{\sum_{w:N_w>0}N_wL_w}{\sum_{w=1}^{K}N_w}.
    \label{eq:app_token_weight}
\end{equation}
Thus packing preserves the isolated predictions and their
token-count-weighted objective.
The equivalence is conditional on the realized reference inputs. If
reference perturbations are used during training, their states and noise
metadata must also be shared by the packed and isolated executions.

\section{Diffusion-Compatible Regression Losses}
\label{sec:app_loss_characterization}

This section expands Section~\ref{sec:mesh_objective}. We first state the
conditional-mean target of velocity regression and the necessary-and-sufficient
Bregman criterion for preserving it. We then show why direct nonlinear
geometric losses need not preserve this target, construct the fixed FK-induced
quadratic, and give its two motion-state implementations. Matrix diagnostics
and a controlled rotation experiment illustrate the construction and its
effect on sampling.

\subsection{Conditional Targets Train the Posterior Average}
\label{sec:app_conditional_training}

We write $\dot{\bx}_t$ for the sampled velocity denoted by $\bv_t$ in the main text. The active velocity is the conditional mean of the sampled interpolation
target (Definition~3.3 of \citealp{cai2025flooddiffusion}):
\begin{equation}
    u_t^{A}\bigl(\bx_t^{0:n(t)},\bc^{0:n(t)}\bigr)
    =\mathbb E\bigl[\dot{\bx}_t^{A}\,\big|\,
      \bx_t^{0:n(t)},\bc^{0:n(t)}\bigr]
    \label{eq:app_regression_variables}
\end{equation}
Fix $t$ away from the schedule knots with $A\ne\varnothing$, and write
$X_t=(\bx_t^{0:n(t)},\bc^{0:n(t)})$. We apply the loss to each active row:
$Y_k=\dot{\bx}_t^k$ and its prediction $f_t^k(X_t)$ take values in an open
convex set $\Omega\subseteq\mathbb R^d$, with $d$ predicted channels per
frame. The per-time risk is
$\mathcal R_t(f)=\mathbb E[|A|^{-1}\sum_{k\in A}
\ell(Y_k,f_t^k(X_t))]$.

\begin{definition}[Mean consistency]
\label[definition]{def:app_mean_consistency}
A loss $\ell(y,a)$ is \emph{mean-consistent} on $\Omega$ if, for every
finitely supported distribution $P$ on $\Omega$, its mean
$\mu_P=\mathbb E_{y\sim P}[y]\in\Omega$ minimizes
$R_P(a)=\mathbb E_{y\sim P}[\ell(y,a)]$. It is \emph{strictly}
mean-consistent if this minimizer is unique.
\end{definition}

\begin{proposition}[Strictly mean-consistent training recovers the posterior
average]
\label[proposition]{prop:app_conditional_training}
Let the risks be finite, and suppose that for each active row $k$, the
conditional mean $\mathbb E[Y_k\mid X_t]$ uniquely minimizes its conditional
expected loss under $\ell$. Then $u_t^{A}$ uniquely minimizes
$\mathcal R_t$ up to null sets. Applying this result at almost every
diffusion time recovers the streaming velocity field, whose zero extension
reproduces the conditional data law under the sampling formulation of
\citet{cai2025flooddiffusion} (Theorems~3.4 and~3.8).
\end{proposition}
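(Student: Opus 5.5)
The plan is to reduce the risk to a per-row, per-input pointwise statement and then integrate. First, by the tower property, I will write
\[
\mathcal R_t(f)=\mathbb E\Bigl[|A|^{-1}\sum_{k\in A}\mathbb E\bigl[\ell(Y_k,f_t^k(X_t))\mid X_t\bigr]\Bigr].
\]
Here $f_t^k$ is any measurable function of $X_t$, and finiteness of the risks justifies exchanging the sum and the expectations. For each row $k$ and almost every value $x$ of $X_t$, I will define the conditional risk $r_k(x,a)=\mathbb E[\ell(Y_k,a)\mid X_t=x]$, using a regular conditional distribution of $Y_k$ given $X_t$. This exists because $Y_k$ takes values in $\mathbb R^d$.

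Second, I will use the hypothesis that $\mu_k(x)=\mathbb E[Y_k\mid X_t=x]$ is the unique minimizer of $a\mapsto r_k(x,a)$. This gives $r_k(x,f_t^k(x))\ge r_k(x,\mu_k(x))$ pointwise, so $\mathcal R_t(f)\ge\mathcal R_t(u_t^A)$. Here I identify the row-$k$ block of $u_t^A$ with $\mu_k$ via \Cref{eq:app_regression_variables}. For uniqueness, suppose $\mathcal R_t(f)=\mathcal R_t(u_t^A)$. Then the nonnegative gap $\sum_k[r_k(X_t,f_t^k(X_t))-r_k(X_t,\mu_k(X_t))]$ has zero expectation, so it vanishes almost surely. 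Each summand is therefore zero almost surely, and uniqueness of the pointwise minimizer forces $f_t^k=\mu_k$ for $P_{X_t}$-almost every input, i.e.\ up to null sets. The main technical point is measurability of the comparison. I will handle it by working with versions of the conditional expectations, which are measurable, and by noting that the exceptional set where a regular version fails is null. When the loss is a Bregman divergence, \Cref{eq:mean_risk_main} gives the gap directly as $D_\varphi(\mu_k,f_t^k)$, which sidesteps this. In general, I will rely on the assumed unique minimization holding almost surely.

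Third, for the time-integrated statement, I will observe the following. The schedule knots are finite, so every $t$ off a null set falls into the case above. If the training objective is $\int\mathcal R_t\,dt$ with positive time density, the same nonnegative-gap argument applied on the product space shows that any minimizer agrees with $u_t^A$ for almost every $t$ and almost every $X_t$. Extending by zero outside the active window recovers the full field of \Cref{thm:app_streaming_locality}. The final claim, that sampling with this field reproduces the conditional data law, I will take directly from Theorems~3.4 and~3.8 of \citet{cai2025flooddiffusion}. This requires only that the learned field agrees with $u_t$ almost everywhere, together with the regularity those theorems assume. The step that needs the most care is this last transfer: almost-everywhere equality of fields must suffice for the ODE flow. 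I would cite the cited theorems' formulation rather than re-prove flow uniqueness.
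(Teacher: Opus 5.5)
Your proposal is correct and follows the paper's argument. The tower property reduces the excess risk to an average of nonnegative per-row conditional excess risks, each vanishing only at the conditional mean, and the sampling claim is delegated to the streaming-locality theorem and FD1's conditional-generation theorem. Your additional care with regular conditional distributions and your product-space argument over diffusion time supply details the paper leaves implicit, but they do not change the route.
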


\begin{proof}
The tower property expresses the excess risk as the expected average of
the per-row conditional excess risks. Each is nonnegative and vanishes
only at $f_t^k(X_t)=u_t^k(X_t)$ by hypothesis and
\Cref{eq:app_regression_variables}.
The sampling claim follows from \Cref{thm:app_streaming_locality} and the
conditional-generation theorem of \citet{cai2025flooddiffusion}.
\end{proof}

\Cref{def:app_mean_consistency} quantifies over finitely supported laws
because they suffice for the necessity direction below. For a Bregman loss
$\ell(Y,a)=D_\varphi(Y,a)+c(Y)$, the identity in
\Cref{eq:app_bregman_risk_decomposition} extends the conclusion to laws
whose mean lies in $\Omega$ and for which
$\mathbb E\|Y\|$, $\mathbb E|\varphi(Y)|$, and $\mathbb E|c(Y)|$ are finite.
For the quadratic losses below, a finite second moment suffices.

\subsection{A Necessary and Sufficient Characterization}

\begin{lemma}[Identification of a mean-consistent gradient]
\label[lemma]{lem:app_mean_identification}
Let $g:\Omega\times\Omega\to\mathbb R^d$ be continuous. Suppose that, for
every $a\in\Omega$ and every finite collection
$\{(\lambda_i,y_i)\}_{i=1}^n$ with $\lambda_i>0$,
$\sum_i\lambda_i=1$, and $\sum_i\lambda_i y_i=a$, one has
$\sum_i\lambda_i g(y_i,a)=0$. Then there is a matrix-valued function $H(a)$
such that
\begin{equation}
    g(y,a)=H(a)(a-y),
    \qquad y,a\in\Omega.
    \label{eq:app_mean_identification}
\end{equation}
\end{lemma}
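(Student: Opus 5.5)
The plan is to fix $a\in\Omega$, set $h(y)=g(y,a)$, and extract linearity of $h$ in $w=y-a$ from the hypothesis by choosing a few specific finite collections with barycenter $a$. Then I put $H(a)=-L_a$, where $L_a$ is the resulting linear map. Throughout I use that $\Omega$ is open and convex, so there is a radius $r_a>0$ with the ball $B(a,r_a)\subseteq\Omega$. Only small perturbations of $a$ are used until the last step.

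\textbf{Step 1 (zero at $a$, oddness, homogeneity).} The one-point collection $n=1$, $y_1=a$, gives $h(a)=0$. For $v$ small and $c>0$ with $a+v$ and $a-cv$ in $\Omega$, take weights $c/(1+c)$ and $1/(1+c)$ on these two points. Their barycenter is $a$, so
\[
c\,h(a+v)+h(a-cv)=0 .
\]
With $c=1$ this gives $h(a-v)=-h(a+v)$, so $h(a+\cdot)$ is odd. Combining the general case with oddness gives $h(a+cv)=c\,h(a+v)$ whenever both points lie in $B(a,r_a)$. Hence $L(w):=h(a+w)$, defined for $\|w\|<r_a$, is odd and positively homogeneous of degree one there. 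I extend $L$ to all of $\mathbb R^d$ by $L(w)=s^{-1}L(sw)$ for small $s>0$; homogeneity makes this independent of $s$.

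\textbf{Step 2 (additivity).} For small $v,w$, take the three points $a+v$, $a+w$, $a-v-w$ with weights $1/3$. Their barycenter is $a$, so $L(v)+L(w)+L(-v-w)=0$. Oddness turns this into $L(v+w)=L(v)+L(w)$, and by rescaling this holds for all $v,w$. An additive map that is positively homogeneous and odd is $\mathbb R$-linear, so $L(w)=L_a w$ for a $d\times d$ matrix $L_a$. Continuity of $g$ is not even needed here, since homogeneity already supplies real scalars; it only reassures that nothing pathological occurs. I define $H(a)=-L_a$.

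\textbf{Step 3 (globalization to all $y\in\Omega$).} For arbitrary $y\in\Omega$, write $v=y-a$. Because $\Omega$ is open, $a-cv\in B(a,r_a)$ for all small $c>0$. The two-point identity of Step 1 with $y$ and $a-cv$ then gives
\[
h(y)=-c^{-1}h(a-cv)=-c^{-1}L_a(-cv)=L_a v=H(a)(a-y),
\]
which is \Cref{eq:app_mean_identification}.

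The main obstacle I expect is bookkeeping rather than anything conceptual: every collection used must lie in $\Omega$ with strictly positive weights. That is why I work only near $a$ first and globalize only at the end using a single two-point collection. Since $a$ was arbitrary, $H(a)$ is defined for every $a\in\Omega$. No regularity of $H$ in $a$ is claimed; that presumably comes later from differentiating the loss.
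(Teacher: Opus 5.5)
Your proof is correct and follows the paper's argument essentially step for step: the one-point law gives $g(a,a)=0$, the symmetric two-point law gives oddness, the equal-weight three-point law on $a+v$, $a+w$, $a-v-w$ gives additivity, and the same unequal-weight two-point law (masses $\rho/(1+\rho)$ and $1/(1+\rho)$) extends the local linear formula to all of $\Omega$. The only difference is that you extract real homogeneity directly from that unequal-weight identity and make the extension from a ball to $\mathbb R^d$ explicit, whereas the paper goes through rational homogeneity plus continuity of $g$; your version correctly shows that continuity is not needed for this lemma.
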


\begin{proof}
Fix $a$ and write $h(v)=g(a+v,a)$ on the open convex set
$V=\Omega-a$, which contains the origin. The one-point distribution gives
$h(0)=0$. On a sufficiently small ball $B(0,r)\subset V$, a two-point
zero-mean distribution supported on $v$ and $-v$ gives
$h(-v)=-h(v)$. Whenever $v,w,v+w$ and their negatives lie in that ball,
the equal-weight distribution on $v,w,-(v+w)$ gives
$h(v)+h(w)+h(-(v+w))=0$, hence $h(v+w)=h(v)+h(w)$.
Repeated addition gives rational homogeneity;
continuity gives real homogeneity. Therefore $h(v)=A(a)v$ near zero for a
linear map $A(a)$.

For arbitrary $v\in V$, choose $\rho>0$ small enough that
$-\rho v\in B(0,r)$. The distribution assigning masses
$\rho/(1+\rho)$ and $1/(1+\rho)$ to $v$ and $-\rho v$ has mean zero, so
$\rho h(v)+h(-\rho v)=0$. Local linearity makes the second term
$-\rho A(a)v$, hence
$h(v)=A(a)v$ throughout $V$. Setting $H(a)=-A(a)$ proves
\Cref{eq:app_mean_identification}.
\end{proof}

The theorem below states the smooth multivariate form of the classical
mean-consistency criterion
\citep{banerjee2005optimality,gneiting2011making}.

\begin{theorem}[Necessary and sufficient loss characterization]
\label{thm:app_mean_bregman}
Let $\ell:\Omega\times\Omega\to\mathbb R$ be three times continuously
differentiable. It is mean-consistent for every finitely supported law whose
support and mean lie in $\Omega$ if and only if
\begin{equation}
    \ell(y,a)=D_\varphi(y,a)+c(y),
    \qquad
    D_\varphi(y,a)
    =\varphi(y)-\varphi(a)-\nabla\varphi(a)^\top(y-a),
    \label{eq:app_target_first_bregman}
\end{equation}
for a convex potential $\varphi\in C^4(\Omega)$ and a target-only function
$c\in C^3(\Omega)$. The loss is strictly mean-consistent if and only if
$\varphi$ is strictly convex.
\end{theorem}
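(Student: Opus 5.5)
Sufficiency comes first, because it is the easy direction. Assume $\ell=D_\varphi+c$ with $\varphi$ convex. Take a finitely supported $P$ on $\Omega$ with mean $\mu\in\Omega$. Expanding $D_\varphi$ and using linearity of the term $\nabla\varphi(a)^\top(y-a)$ in $y$ gives the identity
\[
\mathbb E_P[\ell(Y,a)]-\mathbb E_P[\ell(Y,\mu)]=D_\varphi(\mu,a),
\]
which is the risk decomposition already quoted in \Cref{eq:mean_risk_main}. Convexity gives $D_\varphi(\mu,a)\ge0$, so $\mu$ minimizes $R_P$. If $\varphi$ is strictly convex, then $D_\varphi(\mu,a)>0$ for $a\ne\mu$, so the minimizer is unique.

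For necessity, take $\ell\in C^3$ mean-consistent and set $g(y,a)=\nabla_a\ell(y,a)$, which is continuous. For any finitely supported $P$ with mean $a\in\Omega$, the point $a$ is an interior minimizer of the differentiable function $R_P$. Differentiating under the finite sum gives $\sum_i\lambda_i g(y_i,a)=0$. This is exactly the hypothesis of \Cref{lem:app_mean_identification}, so $\nabla_a\ell(y,a)=H(a)(a-y)$ for some matrix function $H$.

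Next I show $H$ is $C^2$ by plugging in $y=a-e_j$ for small steps. Then I differentiate in $y$: $\nabla_y\nabla_a\ell(y,a)=-H(a)^\top$, which shows the mixed Hessian depends only on $a$. Equality of mixed partials then forces $H$ to be a Hessian. To see this, I differentiate $\nabla_a\ell=H(a)(a-y)$ in $a$ and use symmetry of $\nabla_a^2\ell$. Symmetry at $y=a$ gives $H(a)$ symmetric. Symmetry for all $y$ gives $\partial_k H_{ij}(a)$ symmetric in $(i,k)$, since this is the coefficient of the term linear in $(a-y)$. Together with symmetry of $H$, this means every $\partial_kH_{ij}$ is totally symmetric. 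On the convex, hence simply connected, set $\Omega$, the Poincaré lemma applied twice then yields $\varphi\in C^4$ with $\nabla^2\varphi=H$.

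With this $\varphi$, compute $\nabla_a D_\varphi(y,a)=-\nabla^2\varphi(a)(y-a)=H(a)(a-y)=\nabla_a\ell(y,a)$. So $\ell-D_\varphi$ has zero $a$-gradient on the convex domain, and therefore equals some $c(y)$. Smoothness of $\ell$ and $D_\varphi$ gives $c\in C^3$. For convexity, I take a two-point law with mean $a$ and second-order optimality of $a$ for $R_P$. As the support points shrink to $a$, the leading term is $\lambda_1\lambda_2\, v^\top H(a)v\cdot(\text{positive})$, which forces $H(a)\succeq0$. Equivalently, $D_\varphi(\mu,a)=R_P(a)-R_P(\mu)\ge0$ for all such $\mu,a$, which is the first-order characterization of convexity. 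For strictness: if $\varphi$ is not strictly convex, it is affine on some segment $[\mu,a]$, so $D_\varphi(\mu,a)=0$ and the point mass at $\mu$ has a second minimizer $a$. Conversely, strict convexity gives uniqueness by the first paragraph.

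The main obstacle is the integrability step. There I must extract symmetric, curl-free structure of $H$ from the single relation $\nabla_a\ell=H(a)(a-y)$, and get the regularity bookkeeping right: $H\in C^2$ and $\varphi\in C^4$. I would handle it by writing out $\partial_{a_k}\partial_{a_i}\ell$ explicitly and matching coefficients of $(a-y)$.
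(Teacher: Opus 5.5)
Your proposal is correct and follows essentially the paper's proof. The paper also identifies $\nabla_a\ell=H(a)(a-y)$ via \Cref{lem:app_mean_identification}, gets symmetry and total symmetry of $\partial_kH_{ij}$ from the symmetric action Hessian, integrates to a $C^4$ potential, and reads off a target-only remainder, with the Bregman risk identity and point masses handling sufficiency and strictness. The differences are cosmetic: you cite the Poincaré lemma where the paper writes out the radial integrals, and you obtain convexity from $D_\varphi(\mu,a)\ge0$ for point masses rather than from $\nabla_a^2\ell(a,a)=H(a)\succeq0$. Your two-point ``shrinking'' sketch is unnecessary and not quite right as phrased, since $\nabla^2R_P(a)=H(a)$ exactly for any law with mean $a$; the point-mass argument alone suffices.
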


\begin{proof}
\emph{Necessity.}
Let $g(y,a)=\nabla_a\ell(y,a)$. Because $\Omega$ is open, mean
consistency for every finitely supported law $P$ with mean $a$ gives
$\mathbb E_{y\sim P}[g(y,a)]=0$. \Cref{lem:app_mean_identification}
therefore yields
$g(y,a)=H(a)(a-y)$. For fixed $y_0\in\Omega$ and sufficiently small $h>0$,
$H(a)e_j=[g(y_0,a)-g(y_0+h e_j,a)]/h$ for each coordinate basis vector
$e_j$; hence $H\in C^2$ since $g\in C^2$.
For the law degenerate at $a$, the action $a$ minimizes
$\ell(a,\cdot)$, so
\[
    H(a)=\left.\nabla_a^2\ell(y,a)\right|_{y=a}\succeq0;
\]
Thus $H$ is symmetric. Differentiating $g$ and using symmetry of the
action Hessian gives
\[
    \sum_k\bigl(\partial_jH_{ik}(a)-\partial_iH_{jk}(a)\bigr)(a_k-y_k)=0.
\]
This holds for every $y$ in the open set $\Omega$, so each coefficient
vanishes: for every $i,j,k$,
\begin{equation}
    \partial_j H_{ik}(a)=\partial_i H_{jk}(a).
    \label{eq:app_hessian_integrability}
\end{equation}
Together with $H_{jk}=H_{kj}$, this implies
$\partial_kH_{ij}=\partial_jH_{ik}$, the row-wise integrability condition.

Fix $a_0\in\Omega$, put $r=a-a_0$, and define
\begin{align}
    p_i(a)
    &=\int_0^1\sum_j H_{ij}(a_0+t r)r_j\,dt, \\
    \varphi(a)
    &=\int_0^1 p(a_0+t r)^\top r\,dt .
    \label{eq:app_potential_integrals}
\end{align}
Differentiating the first integral and using
$\partial_kH_{ij}=\partial_jH_{ik}$ gives
\[
    \partial_k p_i(a)
    =\int_0^1\!\left[t\frac{d}{dt}H_{ik}(a_0+tr)
                     +H_{ik}(a_0+tr)\right]dt
    =H_{ik}(a).
\]
Since $H$ is symmetric, the same calculation for the second integral gives
$\partial_k\varphi(a)=p_k(a)$ and hence $\nabla^2\varphi(a)=H(a)\succeq0$.
Thus $\varphi$ is convex and lies in $C^4$ because $H\in C^2$.
Direct differentiation gives
\[
    \nabla_aD_\varphi(y,a)=\nabla^2\varphi(a)(a-y)=g(y,a).
\]
Consequently $\nabla_a[\ell(y,a)-D_\varphi(y,a)]=0$. Since $\Omega$ is
connected, the bracket equals $c(y)=\ell(y,y)\in C^3$, proving
\Cref{eq:app_target_first_bregman}.

\emph{Sufficiency.}
For any law $P$ with mean $\mu_P$ for which the expectations exist,
\begin{align}
    \mathbb E_{y\sim P}[D_\varphi(y,a)]
    &=\mathbb E_{y\sim P}[\varphi(y)]-\varphi(a)
      -\nabla\varphi(a)^\top(\mu_P-a),\\
    \mathbb E_{y\sim P}[D_\varphi(y,a)]
    -\mathbb E_{y\sim P}[D_\varphi(y,\mu_P)]
    &=D_\varphi(\mu_P,a)\geq0.
    \label{eq:app_bregman_risk_decomposition}
\end{align}
The target-only term $c(y)$ cancels. Thus the mean is a minimizer, and strict
convexity of $\varphi$ makes it unique. Conversely, strict mean consistency for
every degenerate law requires $D_\varphi(y,a)>0$ whenever $y\ne a$, which is the
strict supporting-plane characterization of differentiable strict convexity.
\end{proof}

The argument order in \Cref{eq:app_target_first_bregman} is essential: the
random diffusion target is the first argument and the network prediction is
the second.

\subsection{Direct Nonlinear Losses Need Not Be Mean-Consistent}

The model predicts velocity rather than a clean state, so a loss through a
nonlinear map of the clean state must first reconstruct that state. For one
active token, write
\begin{equation}
    \bx_t=\alpha_t\bz+\beta_t\bepsilon,\qquad
    \dot{\bx}_t=\dot\alpha_t\bz+\dot\beta_t\bepsilon,\qquad
    \Delta_t=\alpha_t\dot\beta_t-
                  \dot\alpha_t\beta_t.
    \label{eq:app_velocity_pair}
\end{equation}
At an active interior diffusion time with $\Delta_t\ne0$ and
$\beta_t\ne0$, a proposed velocity $a$ implies the clean-state estimate
\begin{equation}
    R_t(a)=\frac{\dot\beta_t\bx_t-\beta_t a}{\Delta_t},
    \qquad R_t(\dot{\bx}_t)=\bz.
    \label{eq:app_clean_from_velocity}
\end{equation}
For our linear active ramp, $\dot\alpha_t=1$ and $\dot\beta_t=-1$, so
$\Delta_t=-1$ and $R_t(a)=\bx_t+\beta_ta$, as in the main text.
Let $\boldsymbol\eta$ be a clean decoder context from the processed
prefix, and let
$\mathcal S_{\boldsymbol\eta}:R_t(\Omega)\to\mathbb R^{m}$ be any differentiable
nonlinear map from the clean state to a geometric output. Define the
nonlinear composite $\mathcal K_t=\mathcal S_{\boldsymbol\eta}\circ R_t$.
The geometric decoder is denoted by $F_{\boldsymbol\eta}$ in the main text.

\begin{proposition}[A direct nonlinear loss is not generally mean-consistent]
\label[proposition]{prop:app_direct_mesh_inconsistent}
Let $\mathcal K_t:\Omega\to\mathbb R^{m}$ be the differentiable nonlinear
map above and let $M\succeq0$ be a fixed output metric. Define
$\ell_{\mathrm{dir}}(y,a)
=\|\mathcal K_t(y)-\mathcal K_t(a)\|_{M}^2$, where
$\|v\|_M^2=v^\top Mv$. For a target law $P$ on $\Omega$ with finite mean
$\mu_P\in\Omega$ and
$\mathbb E_{y\sim P}\|\mathcal K_t(y)\|_2^2<\infty$,
\begin{equation}
    \nabla_a\,\mathbb E_{y\sim P}[\ell_{\mathrm{dir}}(y,a)]
    =2J_{\mathcal K_t}(a)^\top M
      \left(\mathcal K_t(a)-\mathbb E_{y\sim P}[\mathcal K_t(y)]\right),
    \label{eq:app_direct_mesh_gradient}
\end{equation}
which need not vanish at $a=\mu_P$.
\end{proposition}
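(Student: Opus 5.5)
The proof has two parts: first derive the gradient identity \Cref{eq:app_direct_mesh_gradient} by differentiating under the expectation, then give an explicit scalar counterexample showing that the gradient need not vanish at $a=\mu_P$.

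\emph{Gradient identity.} Expand the loss as
\[
\ell_{\mathrm{dir}}(y,a)=\mathcal K_t(y)^\top M\mathcal K_t(y)-2\mathcal K_t(y)^\top M\mathcal K_t(a)+\mathcal K_t(a)^\top M\mathcal K_t(a).
\]
The finite second moment $\mathbb E\|\mathcal K_t(y)\|_2^2<\infty$ makes the expected risk finite, so
\[
\mathbb E_P[\ell_{\mathrm{dir}}(y,a)]=\mathbb E_P[\mathcal K_t(y)^\top M\mathcal K_t(y)]-2\,\mathbb E_P[\mathcal K_t(y)]^\top M\mathcal K_t(a)+\mathcal K_t(a)^\top M\mathcal K_t(a).
\]
The first term does not depend on $a$. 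In the second, the $a$-dependence sits entirely in the deterministic factor $\mathcal K_t(a)$, because the expectation has already been pulled inside by linearity. So no dominated-convergence argument is needed. Differentiating with the chain rule and using $M=M^\top$ gives
\[
2J_{\mathcal K_t}(a)^\top M\bigl(\mathcal K_t(a)-\mathbb E_P[\mathcal K_t(y)]\bigr),
\]
which is \Cref{eq:app_direct_mesh_gradient}.

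\emph{Counterexample.} It suffices to exhibit one nonlinear $\mathcal S_{\boldsymbol\eta}$ and one law $P$ for which this gradient is nonzero at $\mu_P$. Take $d=m=1$, $M=1$, $\Omega=\mathbb R$, and $\mathcal S(z)=z^2$; any strictly convex smooth map would also work. The map $R_t$ is affine with nonzero slope $-\beta_t/\Delta_t$, so $R_t$ maps $\mathcal K_t(a)=R_t(a)^2$. Let $P$ put mass $1/2$ on each of $\mu\pm s$ with $s>0$, and choose $\mu$ so that $R_t(\mu)\neq0$. Because $R_t$ is affine, the clean estimates are $R_t(\mu)\pm cs$ with $c=\beta_t/\Delta_t\neq 0$. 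Jensen's inequality gives
\[
\mathbb E_P[\mathcal K_t(y)]=R_t(\mu)^2+c^2s^2>\mathcal K_t(\mu).
\]
The Jacobian at $\mu$ is $J=2R_t(\mu)\,R_t'=-2R_t(\mu)\,c\neq0$. The gradient at $\mu_P=\mu$ is therefore
\[
2J\cdot(-c^2s^2)\neq 0,
\]
so $\mu_P$ is not even a stationary point of the risk, let alone its minimizer.

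\emph{Link to the main text and to \Cref{thm:app_mean_bregman}.} The same example instantiates the main-text setting with $\dot\alpha_t=1$, $\dot\beta_t=-1$, where $R_t(a)=\bx_t+\beta_t a$. The gap comes exactly from the difference between the mean of the decoded outputs and the decoded mean. As a consistency check with \Cref{thm:app_mean_bregman}: $\ell_{\mathrm{dir}}$ is not target-first Bregman unless $\mathcal K_t$ is affine on the relevant set, in line with the necessity direction.

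The main care point is ensuring nonvanishing of both factors, the Jacobian and the Jensen gap. The choice $R_t(\mu)\neq0$ and strict convexity of $\mathcal S$ handle this.
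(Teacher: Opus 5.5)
Your proof is correct and follows essentially the paper's route: expand the square so the expectation falls only on $a$-independent factors, differentiate the resulting finite expression, then exhibit one explicit law at whose mean the gradient is nonzero. The only difference is the counterexample: the paper uses a planar rotation $K(\theta)=(\cos\theta,\sin\theta)$ with masses $\tfrac34,\tfrac14$ at $0,\tfrac\pi2$, absorbing the affine map $R_t$ into the angle and tying the failure to FK geometry. Your squared map with a symmetric two-point law instead tracks $R_t$ explicitly and exposes the Jensen gap $c^2s^2$ directly.
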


\begin{proof}
The moment assumption makes the risk finite for every $a\in\Omega$.
Expanding the square gives
\[
\begin{aligned}
R_P(a)
&=\mathbb E_{y\sim P}
  [\mathcal K_t(y)^\top M\mathcal K_t(y)]\\
&\quad-2\mathcal K_t(a)^\top M
  \mathbb E_{y\sim P}[\mathcal K_t(y)]
  +\mathcal K_t(a)^\top M\mathcal K_t(a).
\end{aligned}
\]
Differentiating this finite expression with respect to $a$ gives
\Cref{eq:app_direct_mesh_gradient}.
A single point rigidly rotated in a plane already makes the failure
explicit. The
nonzero affine scale in \Cref{eq:app_clean_from_velocity} can be absorbed into
the scalar angle coordinate. Let
\[
    K(\theta)=(\cos\theta,\sin\theta),\qquad M=I_2,
\]
and let $P$ put mass $\tfrac34$ at $0$ and $\tfrac14$ at $\tfrac\pi2$, so
$\mu_P=\pi/8$ and $\mathbb E_{y\sim P}[K(y)]=(3/4,1/4)$. Since
$J_K(\mu_P)=(-\sin\mu_P,\cos\mu_P)^\top$, the derivative of the direct
expected loss at the mean equals
\[
    2J_K(\mu_P)^\top(K(\mu_P)-\mathbb E_{y\sim P}[K(y)])
    =\tfrac12\left(3\sin\tfrac\pi8-\cos\tfrac\pi8\right)\ne0.
\]
Adding this loss with any positive coefficient to a mean-consistent quadratic
leaves a nonzero gradient at $\mu_P$, so the sum is not generally
mean-consistent.
\end{proof}

\subsection{FK-Induced Quadratic Objective}
\label{sec:app_mesh_quadratic}

\paragraph{\normalfont The dataset-averaged response Gram matrix.}
Here $\boldsymbol\xi$ denotes the standardized clean state written as
$\bz$ in the main text.
Let
$J_{\boldsymbol\eta,\boldsymbol\xi}\in\mathbb R^{m\times d}$ assign a
linear response into an output space with fixed metric $M\succeq0$ to each
clean context $\boldsymbol\eta$ and state $\boldsymbol\xi$, expressed in
the standardized training coordinate. The 263-D features use a designed
kinematic joint response. For the 138-D state, $J$ is the Jacobian of the
surface decoder $F_{\boldsymbol\eta}$,
$\partial F_{\boldsymbol\eta}(\boldsymbol\xi)/\partial\boldsymbol\xi$.
Assume
$0<\mathbb E\|M^{1/2}J_{\boldsymbol\eta,\boldsymbol\xi}\|_F^2<\infty$.
Define the dataset-averaged Gram matrix and its normalization
\begin{equation}
    \overline G
    =\mathbb E_{(\boldsymbol\eta,\boldsymbol\xi)}
      \bigl[J_{\boldsymbol\eta,\boldsymbol\xi}^\top
        MJ_{\boldsymbol\eta,\boldsymbol\xi}\bigr],
    \qquad
    W_{\mathrm{FK}}
    =\frac{d\,\overline G}
       {\operatorname{tr}(\overline G)},
    \qquad
    W_\gamma=\frac{I_{d}+\gamma W_{\mathrm{FK}}}{1+\gamma},
    \qquad \gamma\geq0,
    \label{eq:app_fixed_mesh_weight}
\end{equation}
where the expectation runs over the training entries used for matrix
estimation. Every summand is PSD, and
$\operatorname{tr}(\overline G)
=\mathbb E\|M^{1/2}J_{\boldsymbol\eta,\boldsymbol\xi}\|_F^2\in(0,\infty)$,
so
$W_\gamma\succeq(1+\gamma)^{-1}I_{d}\succ0$ and
$\operatorname{tr}(W_{\mathrm{FK}})=\operatorname{tr}(W_\gamma)=d$. Because
$\boldsymbol\xi$ is the standardized training coordinate, the response
composes with
$\mathbf q=\boldsymbol\mu_{\mathrm{ch}}
+\boldsymbol\sigma_{\mathrm{ch}}\odot\boldsymbol\xi$, so every column
already contains the corresponding channel standard deviation.

\begin{theorem}[The FK-induced quadratic preserves the diffusion field]
\label{thm:app_fixed_mesh_consistency}
Let $W_\gamma$ be the deterministic matrix in
\Cref{eq:app_fixed_mesh_weight}. For every $\gamma\geq0$, the loss
\begin{equation}
    \ell_\gamma(y,a)=\frac{1}{d}(y-a)^\top W_\gamma(y-a)
    \label{eq:app_fixed_mesh_loss}
\end{equation}
is a strictly mean-consistent target-first Bregman divergence: under every
target law with finite second moment its unique optimum is that law's
mean. Together with
\Cref{prop:app_conditional_training}, its population solution yields the
original probability path and terminal distribution.
\end{theorem}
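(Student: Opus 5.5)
The plan is to exhibit $\ell_\gamma$ as a target-first Bregman divergence with a strictly convex quadratic potential. Then the sufficiency half of \Cref{thm:app_mean_bregman} and the risk identity \Cref{eq:app_bregman_risk_decomposition} give the conditional-mean optimum. \Cref{prop:app_conditional_training} then carries this to the streaming field.

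\textbf{Step 1: the potential.} Set $\varphi_\gamma(\bu)=\bu^\top W_\gamma\bu/d$ on $\Omega$. It is $C^\infty$, so the regularity hypotheses of \Cref{thm:app_mean_bregman} hold trivially, with $c\equiv0$. Its Hessian is $2W_\gamma/d$. The discussion after \Cref{eq:app_fixed_mesh_weight} already shows that $W_\gamma$ is symmetric (a nonnegative combination of $I_d$ and an average of PSD Gram matrices $J^\top MJ$) and satisfies $W_\gamma\succeq(1+\gamma)^{-1}I_d\succ0$. The one point to verify is $\operatorname{tr}(\overline G)\in(0,\infty)$, which follows from the Frobenius-moment assumption. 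Hence $\varphi_\gamma$ is strictly convex.

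A direct expansion gives
\[
D_{\varphi_\gamma}(y,a)=\tfrac1d\bigl(y^\top W_\gamma y-a^\top W_\gamma a-2a^\top W_\gamma(y-a)\bigr)=\tfrac1d(y-a)^\top W_\gamma(y-a),
\]
using the symmetry of $W_\gamma$. So $\ell_\gamma=D_{\varphi_\gamma}$, and the target $y$ is the first argument.

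\textbf{Step 2: the optimum.} The uniqueness claim must hold for all laws with a finite second moment, not just finitely supported ones, so I argue directly rather than citing the finitely supported version. For such a law, $\mathbb E\|Y\|$ and $\mathbb E|\varphi_\gamma(Y)|\le \lambda_{\max}(W_\gamma)\,\mathbb E\|Y\|^2/d$ are finite. The identity \Cref{eq:app_bregman_risk_decomposition} then gives excess risk
\[
D_{\varphi_\gamma}(\mu,a)=\tfrac1d(\mu-a)^\top W_\gamma(\mu-a)\ge \tfrac{1}{d(1+\gamma)}\|\mu-a\|^2,
\]
which vanishes only at $a=\mu$. This is the required unique optimum. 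Applied conditionally on $X_t$ (the conditional second moment is finite a.s.), it supplies exactly the per-row hypothesis of \Cref{prop:app_conditional_training}.

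\textbf{Step 3: recovering the field.} \Cref{prop:app_conditional_training} then gives that $u_t^A$ uniquely minimizes $\mathcal R_t$ for a.e.\ $t$. Zero extension via \Cref{thm:app_streaming_locality} recovers the original probability path and terminal law.

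I expect the main obstacle to be the bookkeeping of deterministic versus data-dependent weighting. $W_\gamma$ must not depend on $Y$ or on the target clean state; this holds because $\overline G$ is an expectation fixed before training. The omitted $\beta_t^2$ factor, or any weight that depends only on $(t,k)$ or on $X_t$, merely rescales the conditional risks by positive quantities measurable with respect to the conditioning variables, so per-$X_t$ minimizers are unchanged. I would note this in one line rather than treat it separately.
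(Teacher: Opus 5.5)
Your proposal is correct and follows the paper's proof: you write $\ell_\gamma$ as the target-first Bregman divergence of the strictly convex potential $\varphi_\gamma(\bu)=\bu^\top W_\gamma\bu/d$, read the excess risk $\frac1d(a-\mu_P)^\top W_\gamma(a-\mu_P)$ off the Bregman risk decomposition, and then apply the conditional-training proposition for the sampling claim. Your explicit finite-second-moment integrability check and your use of the symmetry of $W_\gamma$ in the Bregman expansion are details the paper leaves implicit, and you handle both correctly.
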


\begin{proof}
The positive-definite matrix $W_\gamma$ defines the strictly convex potential
\begin{equation}
    \varphi_\gamma(u)=\frac{1}{d}u^\top W_\gamma u,\qquad
    \nabla\varphi_\gamma(u)=\frac{2}{d}W_\gamma u,\qquad
    \nabla^2\varphi_\gamma(u)=\frac{2}{d}W_\gamma\succ0.
    \label{eq:app_mesh_potential}
\end{equation}
Substitution into \Cref{eq:app_target_first_bregman} gives
$D_{\varphi_\gamma}(y,a)=\ell_\gamma(y,a)$, and
\Cref{eq:app_bregman_risk_decomposition} yields
\begin{equation}
    \mathbb E_{y\sim P}[\ell_\gamma(y,a)]
    =\mathbb E_{y\sim P}[\ell_\gamma(y,\mu_P)]
      +\frac{1}{d}(a-\mu_P)^\top W_\gamma(a-\mu_P).
    \label{eq:app_fixed_mesh_risk}
\end{equation}
The final term vanishes only at $a=\mu_P$.
\Cref{prop:app_conditional_training} then gives the sampling statement.
\end{proof}

When $J$ is the Jacobian of a decoder $F_{\boldsymbol\eta}$, the local
Taylor expansion
\begin{equation}
  \|F_{\boldsymbol\eta}(\boldsymbol\xi+\delta)
      -F_{\boldsymbol\eta}(\boldsymbol\xi)
    \|_{M}^2
  =\delta^\top J_{\boldsymbol\eta,\boldsymbol\xi}^\top
   MJ_{\boldsymbol\eta,\boldsymbol\xi}\delta
   +o(\|\delta\|_2^2)
  \label{eq:app_mesh_taylor}
\end{equation}
shows the local output geometry retained by $W_{\mathrm{FK}}$. Since
$D_aR_t=-\beta_t I_d/\Delta_t$, mapping a velocity residual back to
clean coordinates scales this metric by $\beta_t^2/\Delta_t^2$.
We omit this positive, target-independent factor, reweighting time/frame
contributions while preserving their conditional-mean optima.

A target-dependent per-sample matrix is not interchangeable with
\Cref{eq:app_fixed_mesh_weight}. For
$\ell(y,a)=(y-a)^\top W(y)(y-a)$ and $Y\sim P$ with finite mean,
let $W$ be measurable with
$W(y)\succeq0$, $\mathbb E[\|W(Y)\|_F(1+\|Y\|_2^2)]<\infty$, and
$\mathbb E[W(Y)]\succ0$. Minimizing over $a\in\mathbb R^d$ gives
\begin{equation}
    \mathbb E_{y\sim P}[W(y)]\,a
    =\mathbb E_{y\sim P}[W(y)\,y],
    \label{eq:app_target_dependent_weight}
\end{equation}
Thus the unique minimizer is
$a^*=(\mathbb E[W(Y)])^{-1}\mathbb E[W(Y)Y]$, which generally differs from
$\mu_P$. Matrices $W(X)\succ0$ depending only on predictor inputs preserve
$\mathbb E[Y\mid X]$. The offline average in
\Cref{eq:app_fixed_mesh_weight} avoids FK and Jacobian evaluations during training.

\subsection{Response Operators for the Two Motion States}
\label{sec:app_fk_implementations}

The common construction uses different geometric responses for the two
motion states: a surface-decoder Jacobian for 138-D SEED motion and a
designed joint-response operator for native 263-D HumanML3D features.
Both are expressed in standardized predicted coordinates and averaged
offline before training. The path-conditioned branches restrict the raw
Gram matrix to predicted channels before trace normalization.

\paragraph{\normalfont Surface instantiation: the 138-D state.}
Here $F_{\boldsymbol\eta}$ is neutral-SMPL-H mesh FK with $N_v$ template
vertices. Let
$\mathcal T$ be the triangle set of the neutral-shape template,
and let $A_f^{(0)}$ be the rest-template area of
triangle $f$. The lumped and normalized area assigned to vertex $v$ is
\begin{equation}
    \widetilde a_v=\frac13\sum_{f\in\mathcal T:\,v\in f}A_f^{(0)},
    \qquad
    a_v=\frac{\widetilde a_v}{\sum_{u=1}^{N_v}\widetilde a_u},
    \qquad \sum_{v=1}^{N_v}a_v=1,
    \label{eq:app_mesh_vertex_areas}
\end{equation}
and the area-weighted squared vertex-displacement metric is
\begin{equation}
    M
    =\operatorname{diag}(a_1,\ldots,a_{N_v})\otimes I_3,
    \qquad
    \|\delta\mathbf v\|_{M}^2
    =\sum_{v=1}^{N_v}a_v\|\delta\mathbf v_v\|_2^2.
    \label{eq:app_mesh_area_metric}
\end{equation}
The metric uses fixed rest-template areas and vertex correspondences.
Area weighting approximates mean-squared surface displacement, and unit-sum
normalization fixes the objective scale with $\operatorname{tr}(M)=3$.

Let $\mathcal N_m$ be the fixed channel-wise standardization of the physical
state $\mathbf m=[\bc\Vert\bs]$. Its restrictions to the root and
articulated channels are $\mathcal N_c$ and $\mathcal N_s$, respectively:
$\mathcal N_m(\mathbf m)=[\mathcal N_c(\bc)\Vert\mathcal N_s(\bs)]$.
Let $b\in\{\mathrm{FD2},\mathrm{Path}\}$ index the text-only and
path-conditioned branches, with $d_{\mathrm{FD2}}=138$ and
$d_{\mathrm{Path}}=135$. For valid frame $i$ of training motion $c$, let
$h_{c,i}$ contain the teacher-forced predecessor root position and heading (or
the initial pose at the first frame), and define
\begin{equation}
  (\boldsymbol\xi_{c,i}^{(b)},\boldsymbol\eta_{c,i}^{(b)},d_b)=
  \begin{cases}
    (\mathcal N_m(\mathbf m_{c,i}),h_{c,i},138), & b=\mathrm{FD2},\\
    (\mathcal N_s(\bs_{c,i}),(h_{c,i},\bc_{c,i}),135), & b=\mathrm{Path}.
  \end{cases}
  \label{eq:app_mesh_branch_context}
\end{equation}
The context $\boldsymbol\eta_{c,i}^{(b)}$ remains in physical coordinates.
The FK function
$F_{\boldsymbol\eta_{c,i}^{(b)}}
(\boldsymbol\xi_{c,i}^{(b)})\in\mathbb R^{3N_v}$ includes inverse channel
normalization, root--pelvis reconstruction, the neutral-shape SMPL-H pose
correctives, and linear blend skinning (LBS).
Differentiating its final world-space vertices only with respect to the
predicted block gives
\begin{equation}
    J_{c,i}^{(b)}
    =\frac{\partial
       F_{\boldsymbol\eta_{c,i}^{(b)}}
       (\boldsymbol\xi_{c,i}^{(b)})}
      {\partial\boldsymbol\xi_{c,i}^{(b)}}
    \in\mathbb R^{3N_v\times d_b}.
    \label{eq:app_mesh_jacobian}
\end{equation}
Let $\mathcal C_{\mathrm{train}}$ be the calibration set and let
$\mathcal I_c$ contain its selected valid non-initial frames for entry $c$.
For the full-training-set HumanML3D calibration reported in
\Cref{tab:mesh_matrix_diagnostics}, these are all valid non-initial frames
from the 21,466 listed training entries. The average is
\begin{equation}
    \overline G_{\mathrm{FK}}^{(b)}
    =\frac{1}{\sum_{c\in\mathcal C_{\mathrm{train}}}|\mathcal I_c|}
       \sum_{c\in\mathcal C_{\mathrm{train}}}
       \sum_{i\in\mathcal I_c}
       (J_{c,i}^{(b)})^\top MJ_{c,i}^{(b)}
    \succeq0.
    \label{eq:app_mesh_pullback}
\end{equation}
Each branch is normalized independently through
\Cref{eq:app_fixed_mesh_weight} with $d=d_b$, yielding
$W_{\mathrm{FK}}^{(b)}$ and $W_\gamma^{(b)}$.

For text-only FD2, the Jacobian in \Cref{eq:app_mesh_jacobian} contains all
138 columns. In particular, the planar root displacement moves every surface
vertex, while the heading increment changes both the displacement direction
and the orientation of the complete surface. For FD2-Path, the predecessor
context and current root channels are fixed, and the Jacobian contains only
the predicted 135-D articulated block. With the same calibration poses, this
is equivalently the articulated principal block of the \emph{raw} text-only
pullback,
\begin{equation}
    \overline G_{\mathrm{FK}}^{(\mathrm{Path})}
    =\left(\overline G_{\mathrm{FK}}^{(\mathrm{FD2})}\right)_
      {\mathrm{art},\mathrm{art}}.
    \label{eq:app_path_mesh_principal_block}
\end{equation}
It is then trace-normalized with $d_{\mathrm{Path}}=135$, independently of
the 138-D branch.

\paragraph{\normalfont Joint instantiation: the 263-D features.}
Here the prediction is the native 263-D feature frame, and the response
is a designed per-frame operator
$J_{\mathrm{kin}}\in\mathbb R^{66\times263}$ into the 22 joint centers,
under the mean-squared joint-displacement metric
$M=\tfrac{1}{22}I_{66}$. It maps perturbations of the root channels and the
21 stored root-relative joint positions $\mathbf p_j$ to articulated
joint displacements. The
root yaw channel rotates the complete root-relative skeleton about the
vertical axis; the planar displacement and height channels translate every
joint rigidly. A perturbation $\delta\mathbf p_j$ of stored joint $j$
induces the minimum swing rotation of its incoming bone
$\mathbf b_j=\mathbf p_j-\mathbf p_{\mathrm{pa}(j)}$,
\begin{equation}
    \delta\boldsymbol\omega_j
    =\frac{\mathbf b_j\times\delta\mathbf p_j}{\|\mathbf b_j\|_2^2},
    \qquad
    \delta\mathbf p_k^{\mathrm{out}}
    =\delta\boldsymbol\omega_j\times
     \bigl(\mathbf p_k-\mathbf p_{\mathrm{pa}(j)}\bigr)
    \quad\text{for every $k$ in the subtree of $j$},
    \label{eq:app_joint_subtree_response}
\end{equation}
where $\mathrm{pa}(j)$ is the parent of $j$. The output response retains
the component of $\delta\mathbf p_j$ perpendicular to $\mathbf b_j$ and
propagates the induced swing through the subtree. A degenerate bone
contributes zero response for that joint. Velocity,
rotation, and contact channels receive no response, so their columns
vanish and the dataset-averaged Gram matrix has rank at
most $67$; the identity term in $W_\gamma$ restores positive definiteness. The
average assigns equal weight to every frame across 23,384 training entries,
including the additional HumanAct data, and
\Cref{eq:app_fixed_mesh_weight} normalizes with $d=263$. The
path-conditioned variant removes the three conditioned root
channels---yaw and planar displacement, while the root height stays
diffused---as the principal block $[3{:}263)$ of the raw Gram matrix,
renormalized with $d=260$.

\subsection{Spectral Structure of the FK Matrices}

The diagnostics in this subsection use the full-training-set HumanML3D
calibrations of the 138-D and 263-D representations. They describe the
geometric response matrices for these calibration data.

For a normalized FK matrix
$W_{\mathrm{FK}}$, define the effective rank
$r_{\mathrm{eff}}=\exp(-\sum_j p_j\log p_j)$, where
$p_j=\lambda_j(W_{\mathrm{FK}})/\operatorname{tr}(W_{\mathrm{FK}})$
over the nonzero PSD spectrum, and the off-diagonal ratio
$r_{\mathrm{off}}=\|W_{\mathrm{FK}}-
\operatorname{diag}(W_{\mathrm{FK}})\|_F/\|W_{\mathrm{FK}}\|_F$.

\begin{table}[!ht]
\centering
\scriptsize
\caption{Spectral properties of the HumanML3D-calibrated FK matrices.
Values describe $W_{\mathrm{FK}}$ before forming $W_\gamma$; by construction
$\operatorname{tr}(W_{\mathrm{FK}})=d$. Numerical rank counts eigenvalues
above $10^{-8}\lambda_{\max}$. Root share is the fraction of trace in the
three path-conditioned root channels.}
\label{tab:mesh_matrix_diagnostics}
\setlength{\tabcolsep}{2pt}
\fitwidth{\textwidth}{%
\begin{tabular}{l c c c c c c c c}
\toprule
Branch & $d$ & $\operatorname{tr}(\overline G_{\mathrm{FK}})$ & Numerical rank & $\lambda_{\min}(W_{\mathrm{FK}})$ & $\lambda_{\max}(W_{\mathrm{FK}})$ & $r_{\mathrm{eff}}$ & $r_{\mathrm{off}}$ & Root trace share \\
\midrule
Kinematic response, FD2 & 263 & 0.59209720 & 67/263 & $0$ & 103.9970 & 8.2594 & 0.6302 & 0.098\% \\
Kinematic response, FD2-Path & 260 & 0.59151672 & 64/260 & $0$ & 102.9012 & 8.2405 & 0.6300 & -- \\
Surface pullback, FD2 & 138 & 0.03065992 & 136/138 & $0$ & 22.8292 & 12.9788 & 0.5543 & 0.842\% \\
Surface pullback, FD2-Path & 135 & 0.03040161 & 135/135 & $2.280{\times}10^{-6}$ & 22.0458 & 13.0258 & 0.5405 & -- \\
\bottomrule
\end{tabular}
}
\end{table}

\begin{figure}[!ht]
    \centering
    \includegraphics[width=0.95\textwidth]{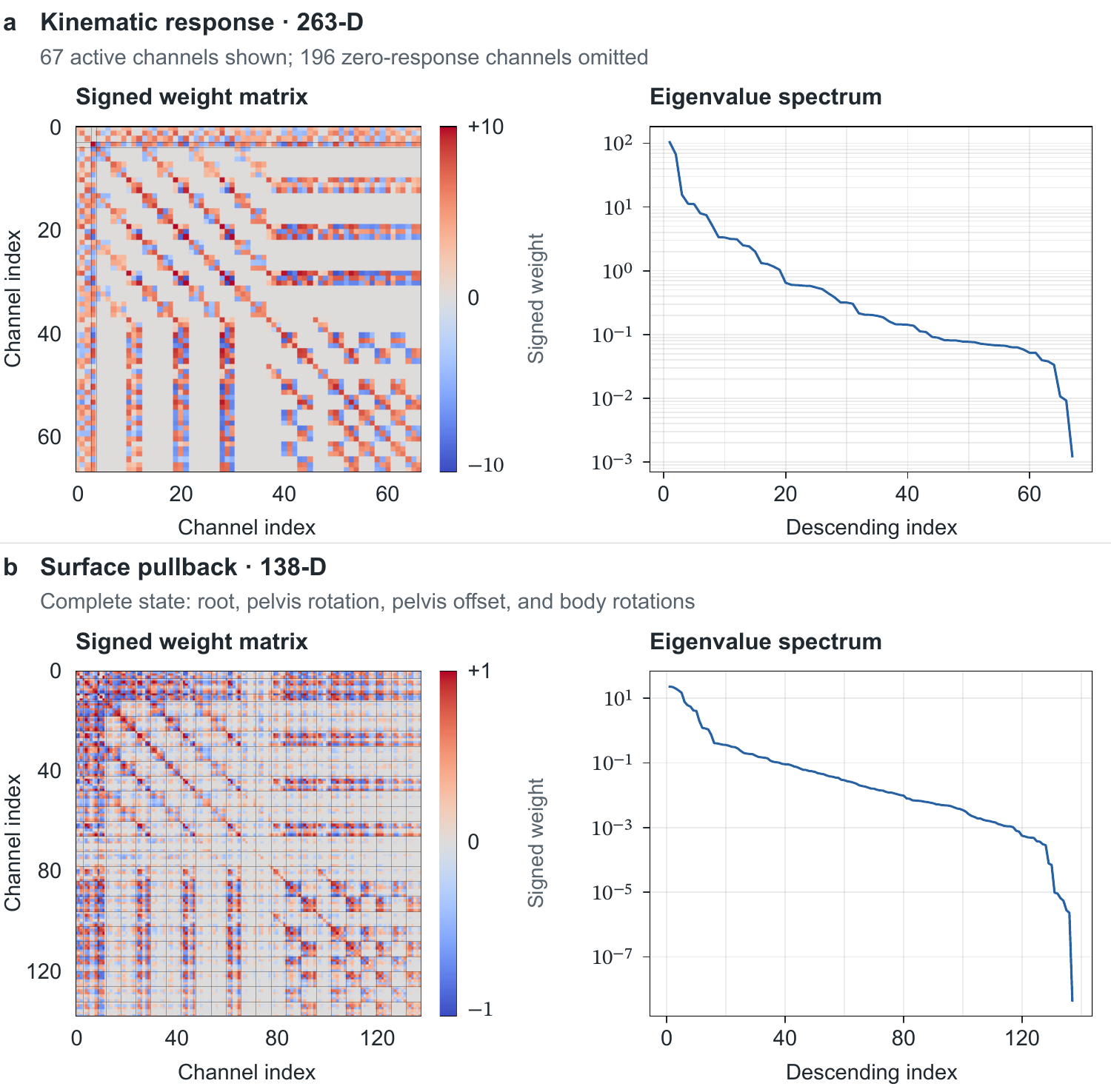}
    \caption{HumanML3D-calibrated FK matrices. Trace-normalized kinematic-response (263-D)
    and surface-pullback (138-D) matrices with descending eigenvalue spectra.
    The 263-D heatmap shows the active 67-channel block; omitted channels
    have zero weight. Each heatmap uses its own signed color scale.}
    \label{fig:fk_matrix_analysis}
\end{figure}

\begin{samepage}
The 138-D surface matrix has off-diagonal ratio 55.43\%
(\Cref{fig:fk_matrix_analysis}); 99.11\% of the off-diagonal squared energy
couples different semantic channel groups rather than coordinates within
one 6D rotation block. Its ten largest eigenvalues carry 90\% of the trace.
The two FD2 zero modes arise because, within one frame, planar root
displacement can cancel the oppositely scaled planar pelvis offset.
The identity term in \Cref{eq:app_fixed_mesh_weight} removes this gauge.
Fixing the root path gives the 135-D matrix full numerical rank.
At $\gamma=1$, $W_\gamma=(I+W_{\mathrm{FK}})/2$ has condition numbers
23.8292 and 23.0458 for FD2 and FD2-Path, respectively, and trace $d_b$.\par
\end{samepage}

\subsection{Effect of FK Supervision on the Sampling Distribution}
\label{sec:app_fk_sampling}

We compare pure nonlinear FK, pure FK-induced quadratic, and MSE supervision
in a controlled 6-D rotation experiment. The dataset contains 8,192 equally
weighted rotations distributed across eight regions. Their centers are the
identity, $+90^\circ$ rotations about the $x$, $y$, and $z$ axes,
$-90^\circ$ rotations about the same axes, and a $100\sqrt{2}^\circ$
rotation about $(1,1,0)^\top/\sqrt{2}$, in that order.
The region sizes are $(2458,1720,1311,983,655,491,328,246)$.
Each rotation is independently perturbed by left-multiplying its center
by an isotropic Gaussian rotation-vector perturbation, with per-component
standard deviations $(4,5,6,7,9,11,13,16)^\circ$, respectively.
Thus region~1 has the highest density, while all entries have the same
sampling probability. Rotations are encoded by their first two columns.
The FK map $F:\mathbb R^6\to\mathbb R^9$ orthonormalizes these columns
and returns the positions of three orthogonal unit-axis endpoints.

For $\bx_t=t\bz+(1-t)\bepsilon$, $\bepsilon\sim\mathcal N(0,I_6)$,
let $a$ be the predicted velocity, $y=\bz-\bepsilon$, and
$\widehat{\bz}=\bx_t+(1-t)a$. The three standalone losses are
\begin{equation}
\begin{aligned}
    \ell_{\mathrm{FK}}(y,a)
    &=\tfrac19\|F(\widehat{\bz})-F(\bz)\|_2^2,\\
    \ell_{\mathrm{quad}}(y,a)
    &=\tfrac16(a-y)^\top W_{\mathrm{FK}}(a-y),\qquad
    \ell_{\mathrm{MSE}}(y,a)=\tfrac16\|a-y\|_2^2.
\end{aligned}
\label{eq:app_three_sampling_losses}
\end{equation}
Here $W_{\mathrm{FK}}$ is the trace-normalized average of
$J_F(\bz)^\top J_F(\bz)$ over the dataset, as in
\Cref{eq:app_fixed_mesh_weight}; it is positive definite in this experiment.

For the stated empirical data law, the exact conditional expectations can
be written using posterior weights over the $N=8{,}192$ training rotations:
\begin{equation}
    w_i(\bx_t,t)=
    \frac{\exp\!\left(-\|\bx_t-t\bz_i\|_2^2/[2(1-t)^2]\right)}
         {\sum_{j=1}^{N}\exp\!\left(-\|\bx_t-t\bz_j\|_2^2/[2(1-t)^2]\right)}.
    \label{eq:app_rotation_posterior}
\end{equation}
At fixed $(\bx_t,t)$, the corresponding velocity target is
$y_i=(\bz_i-\bx_t)/(1-t)$. Thus the conditional targets are
$\overline y=\sum_iw_i y_i$ for MSE and the fixed quadratic, and
$\overline F=\sum_iw_i F(\bz_i)$ for the direct geometric loss.
Removing the parameter-independent conditional variance gives losses
$\|a-\overline y\|_2^2/6$,
$(a-\overline y)^\top W_{\mathrm{FK}}(a-\overline y)/6$, and
$\|F(\bx_t+(1-t)a)-\overline F\|_2^2/9$, respectively.

All three velocity networks have three hidden layers of 256 SiLU units
and share the same initialization and training batches within each seed.
We use 50,000 AdamW updates with batch size 1,024, a cosine learning-rate
schedule from $2\times10^{-4}$ to $2\times10^{-5}$, and
$t\sim\mathcal U(0.001,0.999)$. Expected training losses are evaluated
using exact conditional expectations over the empirical dataset, with
parameter-independent variance terms omitted. The predicted velocity is
integrated directly in the six-dimensional coordinate. For each of two
training seeds, we generate 8,192 samples from the same Gaussian initial
states using 512 Euler steps and decode the final rotations with $F$.

\begin{figure}[htbp]
    \centering
    \begingroup
\definecolor{fkTarget}{HTML}{A5ABB2}
\definecolor{fkMSE}{HTML}{2878A5}
\definecolor{fkQuad}{HTML}{D77B24}
\definecolor{fkDirect}{HTML}{B73552}
\pgfplotsset{fkAxis/.style={font=\scriptsize,width=.96\linewidth,height=4.4cm,
 axis lines=left,tick align=outside,ymajorgrids,grid style={gray!18},
 label style={font=\scriptsize},title style={font=\small},
 scaled ticks=false,legend style={draw=none,font=\scriptsize,legend columns=4}}}
\centering\pgfplotslegendfromname{fkSamplingLegend}\par\vspace{2mm}
\begin{minipage}[t]{0.48\linewidth}
\begin{tikzpicture}
\begin{axis}[fkAxis,title={(a) Rotation distribution},xmin=-180,xmax=180,
 ymin=0,ymax=1.02,xtick={-180,-90,0,90,180},ytick={0,0.25,0.5,0.75,1},
 xlabel={Rotation coordinate (degrees)},ylabel={Cumulative probability},
 legend to name=fkSamplingLegend]
\addplot[fkTarget,line width=2pt] coordinates {(-180.0,0.00000) (-104.0,0.00061) (-90.4,0.00317) (-82.5,0.00745) (-73.2,0.01807) (-66.1,0.02930) (-64.1,0.03394) (-61.6,0.03674) (-59.8,0.04138) (-57.1,0.04529) (-53.8,0.05420) (-46.6,0.06995) (-44.4,0.07715) (-42.1,0.08142) (-37.3,0.09570) (-33.4,0.10461) (-31.8,0.11157) (-30.0,0.11621) (-28.5,0.12207) (-27.6,0.12378) (-23.5,0.13831) (-23.2,0.14026) (-22.1,0.14307) (-19.0,0.15601) (-18.2,0.15833) (-16.5,0.16028) (-14.0,0.16797) (-9.8,0.17542) (-7.9,0.18225) (-6.8,0.18884) (-6.4,0.19189) (-5.0,0.20642) (-4.8,0.21069) (-4.2,0.21899) (-3.8,0.22778) (-3.7,0.22839) (-3.6,0.23120) (-3.0,0.24207) (-2.7,0.25146) (-2.0,0.26709) (-1.4,0.28284) (-1.3,0.28687) (-0.8,0.30066) (-0.4,0.30908) (0.0,0.31995) (0.7,0.34155) (0.9,0.34961) (1.0,0.35486) (1.5,0.37122) (2.1,0.38782) (2.2,0.39258) (3.2,0.41724) (3.7,0.42798) (4.6,0.44458) (5.1,0.44983) (5.5,0.45654) (6.2,0.46387) (7.6,0.47339) (9.0,0.47888) (13.0,0.48499) (14.5,0.48926) (17.0,0.50171) (18.5,0.51233) (19.1,0.51770) (20.9,0.54150) (21.2,0.54419) (22.3,0.55798) (23.6,0.57886) (25.7,0.60791) (25.8,0.61035) (26.9,0.62671) (27.1,0.62830) (28.1,0.64246) (29.8,0.66003) (30.5,0.66614) (30.7,0.66675) (31.3,0.67285) (32.8,0.68140) (34.6,0.68787) (35.7,0.69043) (38.2,0.69971) (38.7,0.70288) (41.4,0.71338) (42.5,0.72034) (43.7,0.73010) (44.0,0.73401) (44.8,0.74133) (45.9,0.74988) (46.8,0.76013) (47.0,0.76343) (48.6,0.77759) (48.9,0.78174) (49.1,0.78271) (50.7,0.80066) (51.7,0.80957) (52.1,0.81506) (52.9,0.82178) (53.5,0.82446) (54.9,0.83508) (57.4,0.84839) (59.0,0.85559) (60.8,0.86121) (63.4,0.87231) (64.3,0.87463) (67.1,0.88831) (67.9,0.89453) (70.2,0.90735) (70.9,0.91345) (71.3,0.91528) (71.9,0.92139) (73.2,0.92859) (73.8,0.93323) (74.5,0.93591) (74.9,0.94019) (76.3,0.94763) (76.8,0.95251) (77.5,0.95642) (78.6,0.96033) (80.5,0.97058) (82.4,0.97705) (86.1,0.98621) (94.6,0.99353) (106.7,0.99768) (120.5,0.99976) (180.0,1.00000)};
\addlegendentry{Target data}
\addplot[fkMSE,line width=.8pt] coordinates {(-180.0,0.00000) (-103.8,0.00061) (-94.0,0.00232) (-83.3,0.00702) (-73.9,0.01562) (-69.8,0.02307) (-62.9,0.03168) (-56.8,0.04364) (-53.5,0.05164) (-50.2,0.05731) (-48.9,0.06128) (-46.9,0.06506) (-46.3,0.06775) (-45.2,0.06946) (-43.6,0.07465) (-42.1,0.07733) (-38.2,0.08777) (-36.6,0.09344) (-34.3,0.09821) (-30.6,0.10962) (-27.4,0.12323) (-24.5,0.13177) (-22.0,0.14209) (-20.1,0.14685) (-18.3,0.15454) (-15.2,0.16339) (-10.2,0.17432) (-8.6,0.17926) (-7.7,0.18274) (-5.7,0.19489) (-5.0,0.20264) (-4.5,0.21039) (-2.6,0.24774) (-1.6,0.27423) (-0.8,0.30017) (-0.5,0.30768) (0.0,0.32367) (0.9,0.34833) (1.2,0.35913) (1.9,0.38037) (2.1,0.38800) (2.7,0.40411) (3.3,0.41791) (3.9,0.42938) (4.2,0.43372) (4.4,0.43823) (5.0,0.44769) (5.4,0.45349) (6.3,0.46362) (7.2,0.47003) (8.7,0.47552) (11.9,0.48376) (13.3,0.48615) (15.2,0.49261) (16.6,0.49945) (18.7,0.51392) (20.6,0.53729) (21.3,0.54413) (23.4,0.57532) (24.9,0.59961) (25.6,0.60815) (27.3,0.63446) (28.6,0.65057) (29.3,0.65735) (31.5,0.67456) (32.8,0.68182) (34.3,0.68860) (36.4,0.69427) (38.2,0.70178) (39.9,0.70721) (41.1,0.71326) (42.4,0.72168) (43.4,0.72931) (43.9,0.73468) (44.7,0.74066) (45.9,0.75183) (47.7,0.77399) (49.2,0.78949) (50.1,0.79669) (51.5,0.81073) (54.9,0.84033) (57.9,0.85425) (59.2,0.85760) (60.0,0.86163) (61.1,0.86456) (63.1,0.87250) (66.3,0.88727) (69.5,0.90601) (70.5,0.91388) (71.6,0.91943) (75.7,0.94958) (78.4,0.96576) (79.7,0.97180) (83.5,0.98297) (87.6,0.98907) (91.5,0.99249) (110.2,0.99890) (180.0,1.00000)};
\addlegendentry{MSE}
\addplot[fkQuad,dashed,line width=.8pt] coordinates {(-180.0,0.00000) (-103.1,0.00049) (-93.3,0.00244) (-82.1,0.00800) (-73.9,0.01581) (-68.1,0.02594) (-62.9,0.03259) (-55.9,0.04700) (-53.4,0.05334) (-50.9,0.05737) (-44.5,0.07483) (-38.5,0.08911) (-37.5,0.09271) (-33.7,0.10126) (-31.5,0.10846) (-29.6,0.11652) (-28.5,0.11951) (-27.1,0.12628) (-24.8,0.13239) (-22.2,0.14307) (-19.9,0.14984) (-18.1,0.15668) (-17.1,0.15857) (-15.6,0.16400) (-9.4,0.17706) (-7.7,0.18329) (-6.1,0.19305) (-5.2,0.20209) (-4.7,0.20905) (-2.6,0.25024) (-1.7,0.27356) (-0.7,0.30469) (0.9,0.34918) (1.3,0.36340) (1.8,0.37769) (2.4,0.39667) (2.7,0.40436) (3.1,0.41193) (3.4,0.41937) (4.2,0.43463) (5.4,0.45343) (6.2,0.46216) (6.9,0.46790) (8.8,0.47565) (10.3,0.48041) (13.0,0.48517) (14.7,0.49054) (16.1,0.49664) (17.6,0.50604) (19.2,0.52063) (19.7,0.52765) (20.5,0.53558) (21.7,0.54987) (22.4,0.56171) (23.4,0.57581) (24.1,0.58771) (24.6,0.59418) (24.9,0.59949) (25.5,0.60681) (27.5,0.63776) (28.8,0.65277) (29.7,0.66089) (31.2,0.67236) (32.8,0.68213) (34.3,0.68884) (36.7,0.69489) (39.4,0.70471) (41.0,0.71246) (42.2,0.71979) (45.5,0.74805) (46.2,0.75519) (47.4,0.77026) (48.5,0.78174) (50.4,0.79865) (51.9,0.81372) (54.4,0.83563) (55.7,0.84412) (57.4,0.85144) (62.8,0.86975) (66.4,0.88593) (69.8,0.90582) (70.7,0.91315) (72.7,0.92535) (75.3,0.94501) (76.1,0.94922) (77.9,0.96143) (79.4,0.96918) (83.7,0.98273) (87.8,0.98871) (90.3,0.99133) (104.9,0.99725) (111.3,0.99915) (124.6,0.99988) (180.0,1.00000)};
\addlegendentry{FK quadratic}
\addplot[fkDirect,line width=.8pt] coordinates {(-180.0,0.00000) (1.1,0.00031) (1.4,0.00269) (1.6,0.00653) (1.8,0.01636) (1.9,0.02606) (2.0,0.03882) (2.1,0.05518) (2.2,0.07562) (2.3,0.09998) (2.4,0.12628) (2.5,0.15540) (2.6,0.18799) (2.7,0.21893) (2.8,0.24768) (2.9,0.27454) (3.0,0.29834) (3.2,0.33923) (3.3,0.35620) (3.4,0.37146) (3.5,0.38495) (3.7,0.40686) (3.9,0.42322) (4.1,0.43719) (4.4,0.45581) (4.5,0.48706) (4.6,0.67621) (4.7,0.79749) (4.8,0.86139) (4.9,0.89948) (5.0,0.92334) (5.1,0.94098) (5.2,0.95380) (5.3,0.96381) (5.5,0.97699) (5.7,0.98413) (5.9,0.98871) (6.5,0.99609) (7.2,0.99829) (9.9,0.99982) (180.0,1.00000)};
\addlegendentry{Nonlinear FK}
\end{axis}\end{tikzpicture}
\end{minipage}\hfill
\begin{minipage}[t]{0.50\linewidth}
\begin{tikzpicture}
\begin{axis}[fkAxis,title={(b) Region probability},xmin=0.4,xmax=8.6,
 ymin=0,ymax=105,xtick={1,2,3,4,5,6,7,8},ytick={0,25,50,75,100},
 xlabel={Data region},ylabel={Probability (\%)},ybar=0pt,bar width=2.5pt]
\addplot[draw=none,fill=fkTarget,error bars/.cd,y dir=both,y explicit,error bar style={black,line width=.3pt},error mark options={mark size=1pt}] coordinates {(1,30.00488) +- (0,0.00000) (2,20.99609) +- (0,0.00000) (3,16.00342) +- (0,0.00000) (4,11.99951) +- (0,0.00000) (5,7.99561) +- (0,0.00000) (6,5.99365) +- (0,0.00000) (7,4.00391) +- (0,0.00000) (8,3.00293) +- (0,0.00000)};
\addplot[draw=none,fill=fkMSE,error bars/.cd,y dir=both,y explicit,error bar style={black,line width=.3pt},error mark options={mark size=1pt}] coordinates {(1,30.14526) +- (0,0.03052) (2,21.13037) +- (0,0.06104) (3,16.09497) +- (0,0.15259) (4,11.96899) +- (0,0.15259) (5,7.89185) +- (0,0.07935) (6,6.08521) +- (0,0.01831) (7,3.85742) +- (0,0.00000) (8,2.82593) +- (0,0.06714)};
\addplot[draw=none,fill=fkQuad,error bars/.cd,y dir=both,y explicit,error bar style={black,line width=.3pt},error mark options={mark size=1pt}] coordinates {(1,29.98047) +- (0,0.07324) (2,21.08765) +- (0,0.11597) (3,16.16211) +- (0,0.06104) (4,11.93237) +- (0,0.01831) (5,7.97729) +- (0,0.01831) (6,6.01807) +- (0,0.09766) (7,3.94897) +- (0,0.01831) (8,2.89307) +- (0,0.06104)};
\addplot[draw=none,fill=fkDirect,error bars/.cd,y dir=both,y explicit,error bar style={black,line width=.3pt},error mark options={mark size=1pt}] coordinates {(1,99.98779) +- (0,0.00000) (2,0.00000) +- (0,0.00000) (3,0.01221) +- (0,0.00000) (4,0.00000) +- (0,0.00000) (5,0.00000) +- (0,0.00000) (6,0.00000) +- (0,0.00000) (7,0.00000) +- (0,0.00000) (8,0.00000) +- (0,0.00000)};
\end{axis}\end{tikzpicture}\end{minipage}\endgroup
    \caption{Sampling under three standalone regression losses.
    (a) Cumulative distributions of the decoded rotation vector projected
    onto $(1,2,3)^\top/\sqrt{14}$. (b) Probability assigned to each data region, determined
    by the nearest training rotation in FK output space.
    Curves and bars show means over two training seeds; whiskers show their
    ranges. MSE and the FK-induced quadratic retain all eight regions,
    whereas nonlinear FK concentrates samples in the highest-density region.}
    \label{fig:app_fk_sampling}
\end{figure}

As shown in \Cref{fig:app_fk_sampling}, MSE and the FK-induced quadratic
assign $30.15\%$ and $29.98\%$ of samples to the highest-density region,
respectively, close to its $30.00\%$ target probability, and retain the
remaining regions. Pure nonlinear FK instead assigns $99.99\%$ to this
region in both runs.
This experiment illustrates how nonlinear FK supervision can bias sampling
toward high-density regions and reduce diversity, while the fixed quadratic
preserves the conditional-mean target.

\section{Root-Path Conditioning and Motion Representation}
\label{sec:app_representation}

This section expands the root-path conditioning in Section~\ref{sec:position_control}.
FD2-Path receives a frame-aligned planar root position and heading, encodes
them as three relative root channels, and keeps these channels clean while
diffusing the remaining body state. The root defines the commanded path;
the pelvis retains its relative translation and rotation so that body
articulation can vary around this path.

SMPL-H reconstruction requires parent-local joint rotations and a global
translation. HumanML3D-263 includes RIC joint positions and an IK-derived
$21\times6$ rotation block; its joint positions do not uniquely determine
bone twist, and its root stream retains heading rather than full pelvis
orientation. Conversion to SMPL therefore requires approximate retargeting.

For SEED generation, our 138-D state stores the heading-relative pelvis rotation and the 21
source parent-local body rotations directly. With its translation channels,
it reconstructs the 22 body joints and SMPL-H surface up to global planar
placement and numerical precision. All encoding and decoding use
neutral SMPL-H~\citep{romero2017embodied,mahmood2019amass} with
$\boldsymbol\beta=\mathbf0$ and the fixed model-mean hand pose, excluding
actor-specific shape and variable hand articulation.

The representation diagnostics below use HumanML3D source motions to
evaluate root smoothness and rotation recovery. The HumanML3D generation
benchmark in the main text instead uses the native 263-D features.

\subsection{Root--Pelvis Separation}

We separate a planar root-motion node from the articulated pelvis. The
root carries ground position and heading; the pelvis stores its vertical
motion, local displacement, and residual 3D orientation relative to that
node. This separates the user-facing trajectory from gait-cycle pelvis
sway.

\subsection{Mass-Weighted Ground-Projected Root Pivot}

Let $\bJ_{k,i}\in\mathbb R^3$, $i=0,\ldots,21$, denote the 22 SMPL-H body-joint
positions. We define an approximate whole-body pivot as
\begin{equation}
    \bq_k=\Pi_{\mathrm{ground}}\!
    \left(\sum_{i=0}^{21}w_i\bJ_{k,i}\right),
    \qquad \sum_i w_i=1,
    \label{eq:app_com}
\end{equation}
where $\Pi_{\mathrm{ground}}(x,y,z)=(x,0,z)$. The weights are derived from
the male-column adjusted Zatsiorsky--Seluyanov body-segment parameters of
\citet{deleva1996adjustments} and are fixed across sequences. For a segment of mass $m$ whose center is at
fraction $r$ from proximal landmark $P$ to distal landmark $D$, the lever
rule assigns $m(1-r)$ to $P$ and $mr$ to $D$.

We map the landmarks of \citet{deleva1996adjustments} to SMPL as follows:
CERV$\approx$Neck, XYPH$\approx$Spine2,
OMPH$\approx\operatorname{mid}(\text{Pelvis},\text{Spine1})$, and
MIDH$=\operatorname{mid}(\text{L\_Hip},\text{R\_Hip})$. The OMPH and MIDH
shares are each split equally between the two joints of their respective
midpoint mappings. Because the 22-joint
skeleton has no separate head-vertex (crown) landmark or hand joint, head
mass is assigned to Head
and hand mass to Wrist; heel is approximated by Ankle. \Cref{tab:com_weights}
lists every resulting node assignment, and \Cref{fig:com_mapping} visualizes
the anatomical-to-SMPL mapping and resulting ground projection.

\begin{figure*}[t]
    \centering
    \includegraphics[width=\textwidth]{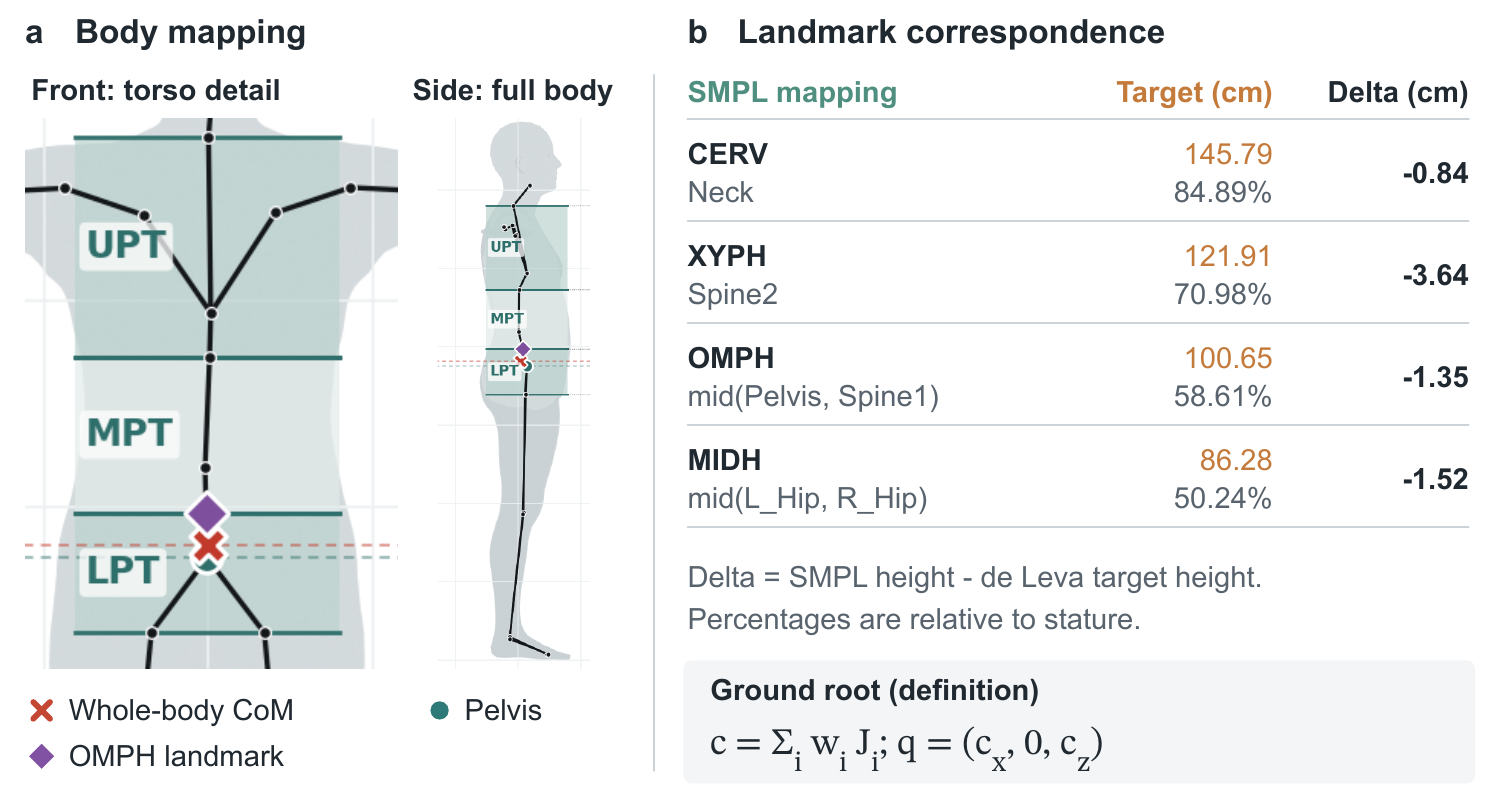}
    \caption{Biomechanical center-of-mass mapping to the neutral
    SMPL-H body. A front torso detail and a full-body side view locate
    the anatomical landmarks and the mass-weighted whole-body center of mass.
    The table compares the landmark heights from
    \citet{deleva1996adjustments} with their SMPL counterparts.
    The ground root is the planar projection of the resulting center of mass.}
    \label{fig:com_mapping}
\end{figure*}

\begin{table}[t]
\centering
\small
\caption{SMPL-H node weights for the ground-projected root pivot.
Weights follow the segment masses and center locations in
\citet{deleva1996adjustments}, using the stated landmark mapping and
normalization to unit sum.}
\label{tab:com_weights}
\setlength{\tabcolsep}{6pt}
\begin{tabular}{l r l r}
\toprule
SMPL joint & Weight & SMPL joint & Weight \\
\midrule
Pelvis      & 0.05846 & Neck       & 0.07875 \\
L\_Hip      & 0.11777 & L\_Collar  & 0.00000 \\
R\_Hip      & 0.11777 & R\_Collar  & 0.00000 \\
Spine1      & 0.05846 & Head       & 0.06940 \\
L\_Knee     & 0.08225 & L\_Shoulder& 0.01146 \\
R\_Knee     & 0.08225 & R\_Shoulder& 0.01146 \\
Spine2      & 0.17064 & L\_Elbow   & 0.02443 \\
L\_Ankle    & 0.02668 & R\_Elbow   & 0.02443 \\
R\_Ankle    & 0.02668 & L\_Wrist   & 0.01351 \\
Spine3      & 0.00000 & R\_Wrist   & 0.01351 \\
L\_Foot     & 0.00605 & R\_Foot    & 0.00605 \\
\bottomrule
\end{tabular}
\end{table}

A pelvis projection can oscillate laterally during walking. The mass-weighted
construction is designed to let opposing limb contributions cancel before
ground projection, while the difference between pelvis and pivot remains
explicitly represented. We verify this effect on the complete HumanML3D
training split in \Cref{tab:root_smoothness}. Of 21,466 listed training entries,
21,462 are valid for this computation. Relative to the pelvis projection, the
mass-weighted center-of-mass projection reduces planar jerk by 20.3\% and curvature by
9.6\%.

\begin{table}[t]
\centering
\small
\caption{Smoothness of the global root definition. Statistics are
computed over the complete HumanML3D training split at 30 fps, with each
motion weighted equally. Jerk is the mean norm
of the third temporal derivative of planar position; curvature is evaluated
at frames whose planar speed exceeds $0.05\,\mathrm{m/s}$.}
\label{tab:root_smoothness}
\setlength{\tabcolsep}{7pt}
\begin{tabular}{lcc}
\toprule
Root definition & Jerk $\downarrow$ ($\mathrm{m/s^3}$) & Curvature $\downarrow$ ($\mathrm{m^{-1}}$) \\
\midrule
Pelvis projection (263-style) & 33.450 & 32.632 \\
Mass-weighted center-of-mass projection (ours) & \textbf{26.650} & \textbf{29.504} \\
\bottomrule
\end{tabular}
\end{table}

The heading-local
root displacement and pelvis offset are
\begin{align}
    \bd_k &=
    \Pi_{xz}\!\left(
       R_y(\psi_k)^\top(\bq_k-\bq_{k-1})
    \right)\in\mathbb R^2, \label{eq:app_root_displacement}\\
    \bo_k &=
       R_y(\psi_k)^\top(\bJ_{k,\mathrm{Pelvis}}-\bq_k)
       \in\mathbb R^3 .
    \label{eq:app_root_features}
\end{align}

\subsection{Heading from Hips and Shoulders}

Pelvis orientation contains local body motion and can fluctuate even when
the intended travel direction is stable. We instead estimate a skeletal
lateral direction by averaging the hip and shoulder lines. On frames with
$\mathbf e_y\times\ba_k\ne\mathbf0$, define
\begin{align}
    \ba_k &=
       (\bJ_{k,\mathrm{RHip}}-\bJ_{k,\mathrm{LHip}})
       +(\bJ_{k,\mathrm{RShoulder}}-\bJ_{k,\mathrm{LShoulder}}),\\
    \mathbf f_k &=\frac{\mathbf e_y\times\ba_k}
                  {\max(\|\mathbf e_y\times\ba_k\|_2,\varepsilon)},\qquad
    \psi_k=\operatorname{atan2}(f_{k,x},f_{k,z}),\\
    \Delta\psi_k&=
       \operatorname{wrap}(\psi_k-\psi_{k-1}).
    \label{eq:app_heading}
\end{align}
The heading uses both landmark pairs without temporal smoothing, and the
representation stores its wrapped increment. On the nondegenerate domain
specified above, the relative motion features are invariant to global yaw.
The $\varepsilon$ guard stabilizes normalization.

\subsection{138-D Layout and Reconstruction}

The full per-frame vector is
\begin{equation}
    \mathbf m_k=
    [
      \underbrace{\Delta\psi_k}_{1},
      \underbrace{\bd_k}_{2},
      \underbrace{\br_k}_{6},
      \underbrace{\bo_k}_{3},
      \underbrace{\rho_6(R_{k,1}^{\mathrm{local}}),\ldots,
                  \rho_6(R_{k,21}^{\mathrm{local}})}_{21\times6}
    ],
    \label{eq:app_138}
\end{equation}
where $\br_k=\rho_6(R_y(\psi_k)^\top R_{k,\mathrm{Pelvis}})$ and $\rho_6$ is the continuous two-column rotation representation of
\citet{zhou2019continuity}. For a rotation matrix $R$, $\rho_6(R)$ contains
its first two columns,
$[R_{00},R_{10},R_{20},R_{01},R_{11},R_{21}]$; $\rho_6^{-1}$ applies the
standard Gram--Schmidt reconstruction. Root quantities are heading-relative, and the 21
body rotations are parent-local.

Starting from planar position $\bq_0$ and heading $\psi_0$,
decoding for $k\geq1$ uses
\begin{align}
    \psi_k &= \psi_{k-1}+\Delta\psi_k,\\
    \bq_k &= \bq_{k-1}
      +R_y(\psi_k)[d_{k,x},0,d_{k,z}]^\top,\\
    \bJ_{k,\mathrm{Pelvis}} &=
      \bq_k+R_y(\psi_k)\bo_k,\\
    R_{k,\mathrm{Pelvis}} &=
      R_y(\psi_k)\rho_6^{-1}(\br_k).
    \label{eq:app_decode}
\end{align}
At $k=0$, the stored pelvis residual and body rotations are decoded directly
under $(\bq_0,\psi_0)$. With valid encoded rotations and the same initial pose,
this reconstruction recovers the original positions and rotations; heading
is recovered modulo $2\pi$. The parent-local body rotations then recover all
remaining joint transforms through forward kinematics. The neutral-shape
SMPL-H pose corrective blend shapes and LBS use those transforms to produce
the final corresponding surface vertices. Changing $(\bq_0,\psi_0)$ places
the same relative joints and mesh at a new ground location and orientation.

\subsection{Path Encoding}
\label{sec:app_path_encoding}

FD2-Path converts a complete per-frame planar pose sequence
$(x_k,z_k,\psi_k)$ through the encoder of its motion state. In the 138-D
state, $\bc_k=[\Delta\psi_k,\bd_k]$ uses the wrapped heading change from
$k-1$ to $k$ and displacement expressed in heading $\psi_k$
(\Cref{eq:app_root_displacement,eq:app_heading}). The native 263-D
features instead store the heading rate and heading-local displacement
leaving frame $k$.

The three encoded root channels are standardized with the same fixed
channel statistics used during training and concatenated with the noisy
body channels. Only the remaining 135 or 260 channels are predicted and
supervised, as in \Cref{eq:position_condition}. During sampling, the supplied
root channels remain fixed; decoding combines them with the generated
pelvis and body state. Tracking is evaluated from the decoded motion using
the conventions in Appendix~\ref{sec:app_path_conventions}.

\subsection{Source-Rotation Fidelity}
\label{sec:results_rotation}

We encode and decode 26,846 HumanML3D source motions and compare the
recovered rotations with the source local rotations. The reported errors
are frame-weighted. Two motions are omitted from the 263-D result because
its SVD-based reconstruction does not converge. \Cref{tab:rotation_fidelity}
reports the mean geodesic error. This round trip measures preservation of
the source rotations through encoding and decoding.

\begin{table}[t]
\centering
\small
\caption{Source-rotation round trip. Frame-weighted mean geodesic error over the HumanML3D source-motion
collection (26,846 motions; two SVD failures excluded for 263-D).}
\label{tab:rotation_fidelity}
\setlength{\tabcolsep}{7pt}
\begin{tabular}{l c c c}
\toprule
Representation & Dim. & Pelvis error ($^\circ$)$\downarrow$ & Body-local error ($^\circ$)$\downarrow$ \\
\midrule
HumanML3D~\citep{guo2022generating} & 263 & $1.635$ & $30.680$ \\
MotionStreamer~\citep{xiao2025motionstreamer} & 272 & $<10^{-5}$ & $<10^{-5}$ \\
FloodDiffusion~2 (ours, 138-D) & 138 & $<10^{-5}$ & $<10^{-5}$ \\
\bottomrule
\end{tabular}
\end{table}

\section{Implementation and Evaluation Details}
\label{sec:app_experimental_details}

This section collects the training configuration and measurement protocols
for the experiments in Section~\ref{sec:experiments}. The timing and FLOP
measurements describe denoiser computation; the benchmark-specific
evaluation protocols define motion quality, path tracking, and foot skate.

\subsection{Training Configuration}
\label{sec:app_training_config}

\paragraph{\normalfont Shared architecture and optimization.}
FD2 uses a 114M-parameter transformer with 8 layers, width 1024, FFN width
2048, and 8 heads. Frozen UMT5-XXL~\citep{chung2023unimax} provides
frame-aligned text features. We use $n_s=30$, $K=8$, and bf16 training.
AdamW uses learning rate $2\times10^{-4}$, $(\beta_1,\beta_2)=(0.9,0.99)$,
cosine decay, and a batch of 128 source sequences per GPU.
The full model uses $\gamma=1$ in \Cref{eq:mesh_quadratic}; the ablation
without FK uses $\gamma=0$.

\begin{table}[htbp]
\centering
\small
\caption{Dataset-specific checkpoint and inference settings. The number of
predicted channels excludes the three clean root channels for FD2-Path.}
\label{tab:app_training_settings}
\setlength{\tabcolsep}{5pt}
\begin{tabular}{l l c c c}
\toprule
Dataset & Variant & Predicted channels & Checkpoint step & CFG scale \\
\midrule
HumanML3D & FD2 & 263 & 60K & 4 \\
HumanML3D & FD2-Path & 260 & 55K & 3 \\
SEED & FD2 & 138 & 300K & 2 \\
SEED & FD2-Path & 135 & 300K & 2 \\
\bottomrule
\end{tabular}
\end{table}

FD2 directly diffuses native 263-D features on HumanML3D and the 138-D
rotation-direct state on SEED, using a common neutral-body convention.
The latter decodes to SMPL-H as described in
Appendix~\ref{sec:app_representation}. The full-training calibration
diagnostics use 21,466 HumanML3D entries for the 138-D state and 23,384
entries for the 263-D features, the latter including additional HumanAct
data. These are the calibration sets for the matrix diagnostics in
\Cref{tab:mesh_matrix_diagnostics}. The common construction and
branch-specific normalization are given in
Appendix~\ref{sec:app_fk_implementations}.

\subsection{Training-Efficiency Measurements}
\label{sec:app_training_efficiency}

For the training-efficiency measurements in \Cref{tab:train_eff}, the $K=1$ baselines use four GPUs for 512 windows per step and eight GPUs for 1,024 windows, with 128 source sequences per GPU. Each NVIDIA H200 has 141\,GB HBM3e memory; reported memory sums peak usage across GPUs. Packing uses the per-GPU batch sizes listed in the table, which differ from the default training batch.

\subsection{KV-Cache Inference Measurements}
\label{sec:app_kv_benchmark}

\paragraph{\normalfont Timing protocol.}
\Cref{tab:kv_cache} compares uncached Full attention with KV-cached
Partial Attention under the following common measurement protocol.
We benchmark the 114M-parameter backbone (8 layers, width 1024, FFN width 2048, 8 heads, and 138-D input/output) on an NVIDIA RTX 4090 using PyTorch 2.9.1 and CUDA 12.8. Both paths use CUDA Graph replay and PyTorch SDPA with automatic backend selection. Each call processes one denoiser branch without CFG at batch 1, with 30 active frames and 0, 90, 450, or 4500 history frames. Text features are supplied as GPU-resident $16\times4096$ tensors; text projection and cross-attention are included in the measured backbone. Projection and feed-forward GEMMs use bf16 except for the time/output projections. Stored K/V uses bf16; normalization and residual arithmetic use FP32, with TF32 enabled.

After 8 warm-up calls, we measure 9 groups of 20 replays with CUDA events and report the median group-mean latency. Latency measures steady-state backbone execution with cached text features; startup, text encoding, motion decoding, and rendering are excluded.

Cached execution processes the 30 active frames and the newly finalized history frame in one forward pass, including causal re-encoding and K/V writes (31 rows, or 30 with empty history). Q/K/V projections and feed-forward layers process the rows jointly. Self-attention evaluates two query groups: the finalized frame attends to its inclusive causal prefix, and active frames attend to the complete visible prefix. Concatenating their outputs implements the Partial Attention mask. History K/V uses $2\times8\times H\times1024\times2$ bytes at history length $H$, or 140.625\,MiB at $H=4500$.

FLOPs count two operations per multiply--add in all linear and input-convolution projections and in the attention products $QK^\top$ and $AV$. Text and time projections, all eight transformer blocks, the output projection, and the newly finalized frame are included. Elementwise operations and memory transfers are excluded from the FLOP count.

\subsection{HumanML3D Evaluation}
\label{sec:app_humanml_eval}

We use the standard HumanML3D split and released text-to-motion
evaluator~\citep{guo2022generating}, with FD1's preprocessing and duration
handling. \Cref{tab:quantitative_eval} reports the mean and 95\% confidence
interval for repeated evaluations. The HumanML3D generation models operate
on the native 263-D representation; the 138-D round-trip and root-smoothness
diagnostics in Appendix~\ref{sec:app_representation} evaluate representation
properties separately.

\subsection{Path-Control Evaluation Conventions}
\label{sec:app_path_conventions}

For FD2-Path, Position in \Cref{tab:seed_results} is the
distance from the mass-weighted center-of-mass projection recomputed from the generated
body joints using \Cref{eq:app_com} to the commanded root trajectory.
Both are measured in the decoded neutral SMPL-H coordinate system, before
the SOMA conversion and its inverse scale adjustment. Distances are pooled
over conditioned frames, computed in meters, and reported in centimeters.
For the 917-case evaluation, the mean and P95 use 183,130 conditioned frames.

On HumanML3D, GMD and DART target planar pelvis positions; OmniControl targets
3D pelvis positions. FD2-Path conditions on planar root position and heading.
Root and 3D in \Cref{tab:humanml_path} measure the mean xz-plane
control-point error and 3D pelvis error relative to ground truth, respectively,
pooled over frames. Skate follows the OmniControl convention at 20 fps:
a foot is in contact when its height is below 5 cm in both adjacent frames,
and skates when both its instantaneous planar speed and its five-frame
average exceed 0.5 m/s. A frame counts as skating if either foot skates;
we report the per-sequence fraction averaged over sequences.

\subsection{SEED Evaluation}
\label{sec:app_seed_eval}

We use the public Kimodo training partition and category-held-out Content
test split of SEED, with the official v1.1 evaluator~\citep{kimodo2026benchmark}.
Path-conditioned methods receive dense paths for the same 917 cases in
\Cref{tab:seed_results}.

We decode the 138-D output into the 22 SMPL-H body-joint rotations and the
world-space pelvis trajectory. The rotations are mapped to SOMA77 using
the inverse joint correspondence of the SEED training-data conversion;
the neck rotation is assigned to Neck1, and unmapped joints retain identity
rotations. Pelvis translations are divided by the training-conversion
scale factor 0.915609. The official SOMA skeleton then produces 77-joint
positions by forward kinematics at 30 fps. These positions are passed to
the released TMR-SOMA-RP-v1 motion encoder, which applies its standard
joint selection, canonicalization, and normalization. R@3 and FID use
the official embedding and aggregation pipeline.

For generated motions, SEED Skate uses the same contact thresholds for
all methods: 0.10 m in height and 0.15 m/s in speed, implemented by the
official motion conversion routine. We report the mean 3D foot speed
over contact frames in cm/s, averaged over sequences. The official metric
returns zero for a sequence with no contact frames. Parenthesized
Kimodo and ARDY scores use the official models' predicted contact labels
on the same generated motions.

\end{document}